\theoremstyle{plain}
\newtheorem{theorem}{Theorem}[section]
\newtheorem{proposition}[theorem]{Proposition}
\newtheorem{lemma}[theorem]{Lemma}
\theoremstyle{definition}
\newtheorem{definition}[theorem]{Definition}
\theoremstyle{remark}
\definecolor{LightGreen}{rgb}{0.93,0.98,0.96}
\definecolor{Green}{rgb}{0.0,0.5,0.0}
\newcommand{\ie}{\textit{i}.\textit{e},\ }
\DeclareMathOperator*{\esssup}{ess\,sup}
\DeclareMathOperator*{\essinf}{ess\,inf}
\icmltitlerunning{Unraveling the Impact of Heterophilic Structures on Graph Positive-Unlabeled Learning}
\begin{document}

\twocolumn[

\icmltitle{Unraveling the Impact of Heterophilic Structures on Graph \\ Positive-Unlabeled Learning}

\begin{icmlauthorlist}
\icmlauthor{Yuhao Wu}{usyd}
\icmlauthor{Jiangchao Yao}{SJTU,SJlab}
\icmlauthor{Bo Han}{hkbu}
\icmlauthor{Lina Yao}{unsw}
\icmlauthor{Tongliang Liu}{usyd}
\end{icmlauthorlist}

\icmlaffiliation{usyd}{Sydney AI Center, The University of Sydney}
\icmlaffiliation{SJTU}{CMIC, Shanghai Jiao Tong University}
\icmlaffiliation{SJlab}{Shanghai AI Laboratory}
\icmlaffiliation{unsw}{University of New South Wales}
\icmlaffiliation{hkbu}{TMLR Group, Department of Computer Science, Hong Kong Baptist University}
\icmlcorrespondingauthor{Tongliang Liu}{tongliang.liu@sydney.edu.au}

\icmlkeywords{Machine Learning, ICML}

\vskip 0.3in
]



\printAffiliationsAndNotice{}  

\begin{abstract}

While Positive-Unlabeled (PU) learning is vital in many real-world scenarios, its application to graph data still remains under-explored. We unveil that a critical challenge for PU learning on graph lies on the edge heterophily, which directly violates the \textit{irreducibility
assumption} for \textit{Class-Prior Estimation} (class prior is essential for building PU learning algorithms) and degenerates the latent label inference on unlabeled nodes during classifier training. In response to this challenge, we introduce a new method, named \textit{\underline{G}raph \underline{P}U Learning with \underline{L}abel Propagation Loss} (GPL). Specifically, GPL considers learning from PU nodes along with an intermediate heterophily reduction, which helps mitigate the negative impact of the heterophilic structure.
We formulate this procedure as a bilevel optimization that reduces heterophily in the inner loop and efficiently learns a classifier in the outer loop. Extensive experiments across a variety of datasets have shown that GPL significantly outperforms baseline methods, confirming its effectiveness and superiority.

\end{abstract}

\section{Introduction}

Positive-Unlabeled (PU) learning~\citep{denis1998pac,de1999positive,denis2005learning}, where a binary classifier is trained from positive and unlabeled samples, has made remarkable progress on image~\citep{kiryo2017positive,niu2016theoretical,garg2021mixture} and text~\citep{li2016classifying} data. Nowadays, there is burgeoning interest in extending PU learning to graph-structured data due to the ubiquitous nature of real-world contexts~\citep{hu2020open,gaudelet2021utilizing,velivckovic2023everything,wu2024mitigating}. For instance, in online transaction networks, fraudsters are labeled as positive nodes only upon being detected~\citep{yoo2021accurate}, and a similar case holds in pandemic graph prediction~\citep{panagopoulos2021transfer} where only affected nodes are identified as positive, while the remaining nodes are actually label-agnostic.

However, most current approaches to PU learning are built on the hypothesis that each sample is independently generated. Such a premise hinders these methods from readily adapting to graph data where nodes are interdependent by edges~\cite{wu2022handling,zhao2020uncertainty}. Our investigation reveals that a primary challenge in adapting these approaches to graph data is the prevalence of \textit{heterophilic structure}, where positive and negative nodes are connected~\citep{zhu2020beyond,zhang2019heterogeneous}. Such heterophily  widely existed in standard graph learning~\citep{platonov2023characterizing,luan2022revisiting}, intensifies when dealing with only positive and unlabeled nodes of interest.

The heterophilic structure impedes two aspects of PU learning: (i) It results in the latent feature entanglement of positive and negative nodes due to heterophilic edges, which directly challenges \textit{Class-Prior Estimation} (CPE), \ie estimating the fraction of positive nodes among the unlabeled nodes. CPE methods rely on the \textit{irreducibility} assumption, which holds that the patterns belonging to positive nodes could not possibly be confused with patterns from negative nodes.~\citep{garg2021mixture,bekker2020learning,ivanov2020dedpul}. However, heterophilic structures violate this assumption, yielding an overestimated class prior. (ii) The presence of heterophilic structure complicates the latent label inference for unlabeled nodes. More precisely, the heterophilic edges between different classes mislead the message passing among same-class nodes, hindering the accurate inference of latent labels for unlabeled nodes during the classifier training process (termed as \textit{PU classification} for simplicity)~\cite{garg2021mixture,yao2020rethinking}. The adverse effect to CPE and PU classification significantly challenges graph PU learning when heterophilic structure exists.

In our paper, we present a \textit{\underline{G}raph \underline{P}U Learning with \underline{L}abel Propagation Loss} (GPL) method to alleviate the side effect of heterophilic structures on PU learning. GPL mitigates the impact of heterophilic edges by reducing their weights based on a proposed Label Propagation Loss (LPL). Specifically, optimizing LPL results in strengthening the weights of homophilic edges, \ie edges linking same-class nodes, while simultaneously reducing the weights of heterophilic edges, \emph{relying solely on the observed positive nodes}. This approach is effective because homophilic edges inherently support the label propagation process, whereas heterophilic edges between different classes are counterproductive. Therefore, by matching the given/predicted labels and the propagated labels, we can reduce the impact of heterophilic edges, which then promotes the accuracy of CPE and PU classification.

Taking LPL and the vanilla graph PU learning interactively forms the basic idea of GPL: the diminished heterophilic structure by LPL creates a conducive context for graph PU learning. The classifier trained through graph PU learning, in turn, helps refine this structure with the aid of more accurate prediction labels. We formulate the optimization of LPL and graph PU learning as an efficient bilevel optimization, where the heterophily is reduced in the inner loop and the classifier is learnt in the outer loop. In a nutshell, our contributions can be summarized into the following points:
\begin{itemize}
    \item We first identify the challenge of learning from PU nodes with heterophilic structures. Through both theoretical analysis and empirical evidence, we formally elucidate the difficulty encountered in CPE and PU classification when the heterophilic structure exists.
    \item We propose an end-to-end GPL method, which considers the optimization of the graph structure to reduce the negative impact of heterophilic edges on graph PU learning. This reduction has been theoretically proved. Besides, GPL leverages a bilevel algorithm, obviating the need for a known class-prior and eliminating the assumption of a homophily graph structure.
    \item We conduct extensive experiments on datasets ranging from homophily to heterophily to demonstrate the superiority of our proposed framework over the baselines. Furthermore, we provide comprehensive analyses of the underlying mechanisms of our framework.
\end{itemize}

\section{Preliminaries}
\subsection{Notation}

We consider a set of positive (P) and negative (N) nodes with indices $i \in \{1,2,\cdots,n\}$, represented by an observed undirected graph $ G = (\mathcal{V}, \mathcal{E})$, where $\mathcal{V} = \{v_1, \dots, v_n\}$ denotes the node set and $\mathcal{E} = \{ e_{ij}\}$ denotes the edge set. The observed edges induce an adjacency matrix $\bm{A} \in \mathbb{R}^{n \times n}$ where $\bm{A}_{ij} = 1$ if nodes $i$ and $j$ are connected by edges  $e_{ij}$ and $\bm{A}_{ij} = 0$ otherwise. Moreover, each node $v_i$ has an input feature vector denoted by $\bm{x}_{i} \in \mathbb{R}^{D}$ where $D$ is the dimension, and a latent binary label $y_{i} \in \{-1, +1\}$. Here, assume the P and N nodes are drawn from the P and N class-conditional density $[\mathbbm{P}_{\mathrm{p}}(\bm{x},\mathcal{G}_{\bm{x}}^{\bm{A}}),\mathbbm{P}_{\mathrm{n}}(\bm{x},\mathcal{G}_{\bm{x}}^{\bm{A}})] := [\mathbbm{P}(\bm{x},\mathcal{G}_{\bm{x}}^{\bm{A}}|y=+1), \mathbbm{P}(\bm{x},\mathcal{G}_{\bm{x}}^{\bm{A}}|y=-1)]$, where $\mathcal{G}_{\bm{x}}^{\bm{A}} = ( \{\bm{x}_j\}_{j\in\mathcal{N}_{\bm{x}}},\bm{A}_{\bm{x}} )$ denotes the ego-graph centered at node $\bm{x}$, $\mathcal{N}_{\bm{x}}$ is the set of neighboring nodes (within a certain number of hops) in the ego-graph  of $\bm{x}$ and $\bm{A}_{\bm{x}}$ is its associated adjacency matrix. In learning from positive and unlabeled (U) nodes, we only know a fraction of P nodes, denoted as a set $\mathcal{P} \sim \mathbbm{P}_\mathrm{p}(\bm{x},\mathcal{G}_{\bm{x}}^{\bm{A}})$. The other $\mathcal{V} \backslash \mathcal{P}$ is unlabeled and represented as a set $\mathcal{U} \sim \mathbbm{P}(\bm{x},\mathcal{G}_{\bm{x}}^{\bm{A}})$. Given $G$ with $\mathcal{P}$ and $\mathcal{U}$, our goal is to learn a binary classifier $f_{w}:(\bm{x},\mathcal{G}_{\bm{x}}^{\bm{A}})  \rightarrow [0, 1]$ that accurately approximates $\mathbbm{P}(y = +1 | \bm{x},\mathcal{G}_{\bm{x}}^{\bm{A}}) $ on $\mathcal{U}$ with the trainable parameters $w$. Our formulation is fairly different from existing PU learning works that assume \textit{i.i.d.} inputs.  

\begin{figure} [t]
\centering
\includegraphics[width=1\linewidth]{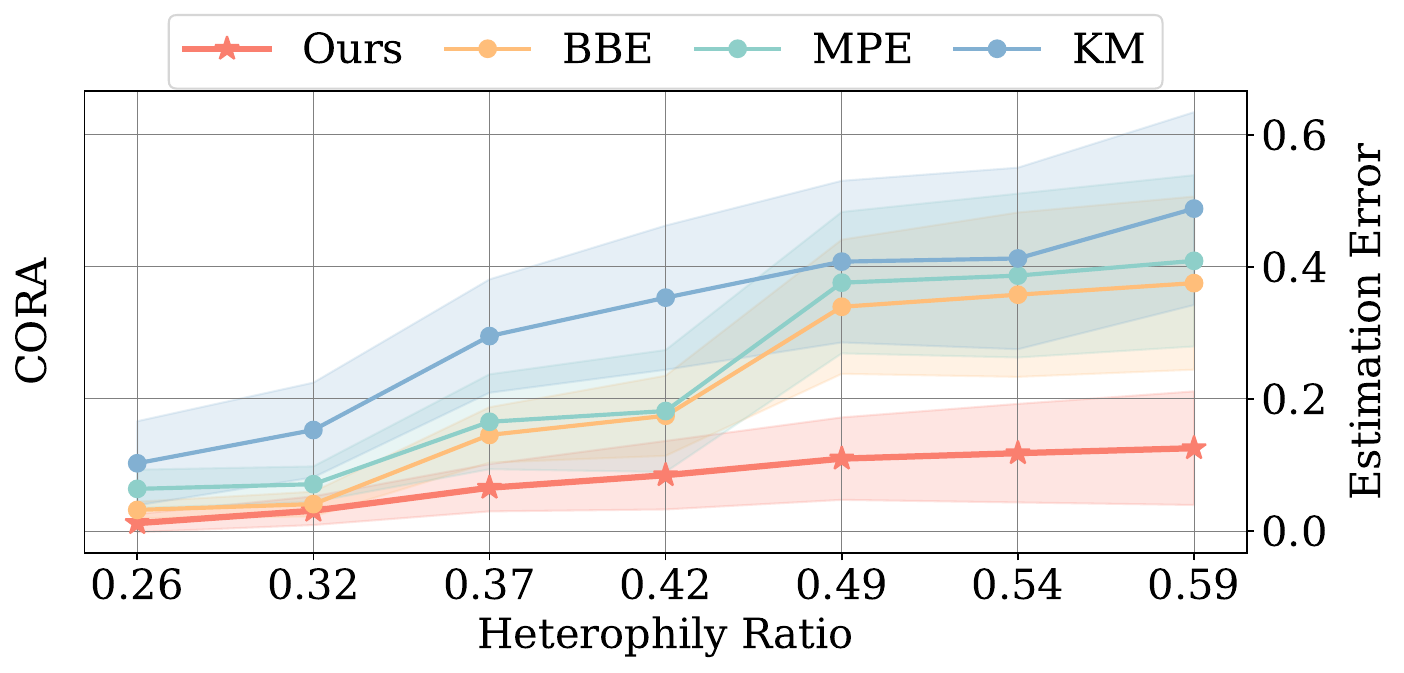}
\vspace{-10pt}
\caption{ Estimation error of our method and the baseline estimator for Class-Prior Estimation, under varying heterophily ratio. The way to modify the graph structure with different heterophily ratios is referred to~\citep{ma2021homophily}.}
\label{figure::diff_ratio_cpe}
\vspace{-5pt}
\end{figure}

\subsection{Motivation}

\paragraph{Class-Prior Estimation (CPE) with Heterophily}

Formally, CPE plays the role of estimating the class prior $\pi_\mathrm{p}\in (0,1)$ for PU learning methods~\citep{yao2020rethinking,garg2021mixture}, given samples from the marginal distribution $\mathbbm{P}(\bm{x},\mathcal{G}_{\bm{x}}^{\bm{A}})$ and samples from class-conditional distribution $\mathbbm{P}_\mathrm{p}(\bm{x},\mathcal{G}_{\bm{x}}^{\bm{A}})$ of P nodes. Specifically, $\mathbbm{P}(\bm{x},\mathcal{G}_{\bm{x}}^{\bm{A}})$ is mixture of class-conditional distributions of P and N nodes, \ie 
\begin{equation}
\mathbbm{P}(\bm{x},\mathcal{G}_{\bm{x}}^{\bm{A}})=\pi_\mathrm{p}\mathbbm{P}_\mathrm{p}(\bm{x},\mathcal{G}_{\bm{x}}^{\bm{A}}) + (1-\pi_\mathrm{p})\mathbbm{P}_\mathrm{n}(\bm{x},\mathcal{G}_{\bm{x}}^{\bm{A}}).
\label{ptr}
\end{equation}
To ensure the identifiability of $\pi_\mathrm{p}$, almost all CPE algorithms assume $\mathbbm{P}_\mathrm{n}$ to be irreducible with respect to $\mathbbm{P}_\mathrm{p}$~\citep{blanchard2010semi,ivanov2020dedpul,jain2016nonparametric,wu2023making,liu2015classification}. However, this assumption is based on \textit{independent and identically distributed} (i.i.d) data, which is not applicable to graph data. Here, we use the following theorem to specify the form of irreducible condition in our graph PU learning setting and leave its complete proof in Appendix~\ref{app:A.} for reference.

\begin{figure*} [h]
\centering
\includegraphics[width=1\linewidth]{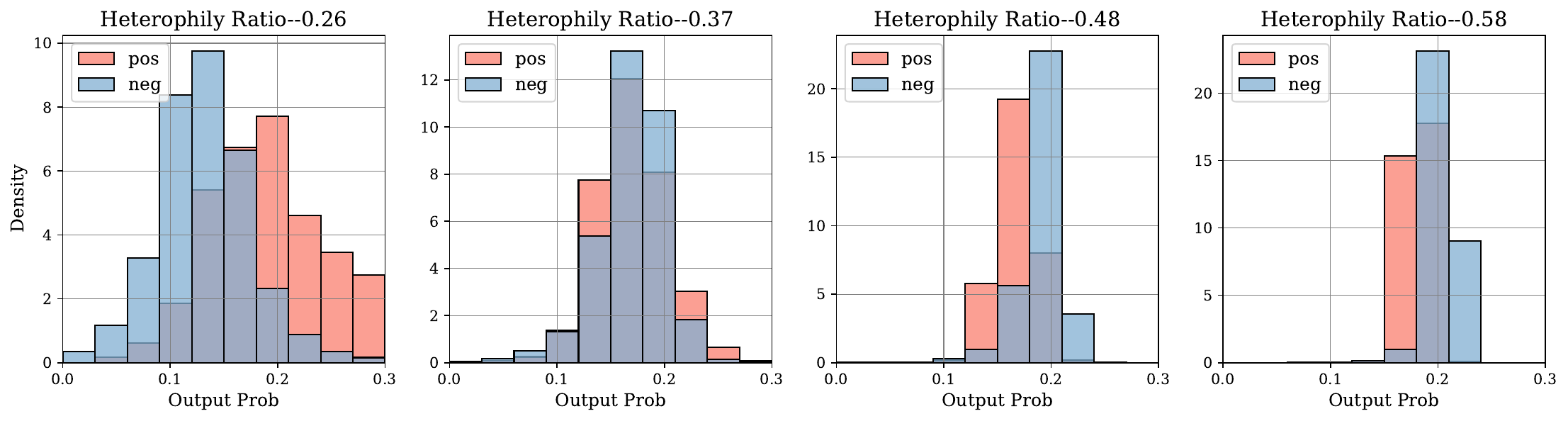}

\caption{The distributions of the predicted probabilities (of being positive) for unlabeled nodes when training a classifier on positive and unlabeled CORA dataset, under varying heterophily ratios. The methodology employed to modify the graph structure with different heterophily ratios can be referred to~\citep{ma2021homophily}. We show that as the heterophily ratio increases, the distribution of predicted probabilities of positives and negatives become less and less separable.}

\label{figure::pre_distribution_hete}
\end{figure*}

\begin{theorem} [Graph Irreducible Condition] 
\label{irr_condtion}
In a graph, $\mathbbm{P}_\mathrm{n}$ is irreducible with respect to $\mathbbm{P}_\mathrm{p}$ iff it satisfies the following condition: 
\begin{equation}
    \esssup_{i \in \mathcal{V}} \mathbbm{P}(y = +1 | \bm{x}_i,  \mathcal{G}_{\bm{x}_i}^{\bm{A}})  = 1,
\end{equation}
where $\esssup$ is the essential supremum.
\end{theorem}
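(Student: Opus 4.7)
The plan is to reduce this statement to the classical characterization of irreducibility used in the i.i.d. PU literature (as in \citet{blanchard2010semi}), after verifying that the graph setting is compatible with the required measure-theoretic setup. Concretely, I would view $(\bm{x},\mathcal{G}_{\bm{x}}^{\bm{A}})$ as a single random object living on a product measurable space, so that $\mathbbm{P}_\mathrm{p}$ and $\mathbbm{P}_\mathrm{n}$ are just two probability measures on this space and Eq.~(\ref{ptr}) is a standard mixture decomposition. Irreducibility of $\mathbbm{P}_\mathrm{n}$ with respect to $\mathbbm{P}_\mathrm{p}$ is then defined in the usual way: there is no decomposition $\mathbbm{P}_\mathrm{n}=\gamma\mathbbm{P}_\mathrm{p}+(1-\gamma)Q$ with $\gamma\in(0,1]$ and $Q$ a probability measure.

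The first step is to apply Bayes' rule to $\mathbbm{P}(y=+1\mid \bm{x},\mathcal{G}_{\bm{x}}^{\bm{A}})$ using the mixture identity (\ref{ptr}), giving
\begin{equation*}
\mathbbm{P}(y=+1\mid\bm{x},\mathcal{G}_{\bm{x}}^{\bm{A}})
=\frac{\pi_\mathrm{p}}{\pi_\mathrm{p}+(1-\pi_\mathrm{p})\,\frac{d\mathbbm{P}_\mathrm{n}}{d\mathbbm{P}_\mathrm{p}}(\bm{x},\mathcal{G}_{\bm{x}}^{\bm{A}})},
\end{equation*}
where the Radon--Nikodym derivative is taken on the region where $\mathbbm{P}_\mathrm{n}\ll\mathbbm{P}_\mathrm{p}$ (on the complement the posterior equals $1$ trivially). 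From this identity, $\esssup_{i\in\mathcal{V}}\mathbbm{P}(y=+1\mid\bm{x}_i,\mathcal{G}_{\bm{x}_i}^{\bm{A}})=1$ is equivalent to $\essinf_{(\bm{x},\mathcal{G})} \frac{d\mathbbm{P}_\mathrm{n}}{d\mathbbm{P}_\mathrm{p}}(\bm{x},\mathcal{G})=0$ (interpreted as $0$ whenever $\mathbbm{P}_\mathrm{n}$ is not absolutely continuous with respect to $\mathbbm{P}_\mathrm{p}$ somewhere).

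The second step is the classical equivalence between this essential-infimum condition and irreducibility. For the ``only if'' direction, if a decomposition $\mathbbm{P}_\mathrm{n}=\gamma\mathbbm{P}_\mathrm{p}+(1-\gamma)Q$ existed with $\gamma>0$, then $\frac{d\mathbbm{P}_\mathrm{n}}{d\mathbbm{P}_\mathrm{p}}\geq\gamma>0$ almost everywhere, contradicting the zero essential infimum. For the ``if'' direction, I would define $\gamma^\ast=\essinf \frac{d\mathbbm{P}_\mathrm{n}}{d\mathbbm{P}_\mathrm{p}}$ and, assuming $\gamma^\ast>0$, exhibit the explicit decomposition $\mathbbm{P}_\mathrm{n}=\gamma^\ast\mathbbm{P}_\mathrm{p}+(1-\gamma^\ast)Q$ with $Q=(\mathbbm{P}_\mathrm{n}-\gamma^\ast\mathbbm{P}_\mathrm{p})/(1-\gamma^\ast)$, which is a valid probability measure by construction, contradicting irreducibility.

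The main obstacle is largely bookkeeping rather than technical depth: one has to be careful that the ego-graph $\mathcal{G}_{\bm{x}}^{\bm{A}}$ takes values in a space (varying-size neighborhoods with associated adjacency blocks) on which $\mathbbm{P}_\mathrm{p}$ and $\mathbbm{P}_\mathrm{n}$ are jointly well-defined so that Radon--Nikodym derivatives and the notion of essential supremum over $i\in\mathcal{V}$ make sense. Once the sample space is fixed and the mixture $\mathbbm{P}$ is chosen as the common dominating measure, the rest of the argument is the standard Blanchard--Scott equivalence applied verbatim.
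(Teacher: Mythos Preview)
Your proposal is correct and follows essentially the same route as the paper's Appendix~A: apply Bayes' rule to rewrite the posterior in terms of the likelihood ratio $\mathbbm{P}_\mathrm{n}/\mathbbm{P}_\mathrm{p}$, then use the classical equivalence between irreducibility and $\essinf \mathbbm{P}_\mathrm{n}/\mathbbm{P}_\mathrm{p}=0$. The only difference is presentational---the paper packages these two steps as a cited lemma (from \citet{scott2013classification}) and a stated proposition and simply combines them, whereas you supply self-contained arguments for both directions and flag the measure-theoretic housekeeping for the ego-graph sample space.
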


With theorem~\ref{irr_condtion}, the irreducible condition on a graph essentially assumes that there exists one subset of nodes, along with their connected neighbouring nodes from $\mathcal{G}_{\bm{x}_i}^{\bm{A}}$, must all belong to the P class with probability one. However, the heterophilic structures in the graph suggest that P nodes are likely to have N nodes as neighbours, and vice versa. This characteristic violates the irreducible condition of the graph, yielding an overestimated class prior. As demonstrated in Figure~\ref{figure::diff_ratio_cpe}, we can observe that the estimation error for the class prior escalates with the increase in heterophilic structure, which thus recalls extra design for graph PU learning.

\paragraph{PU classification with Heterophily}
The typical approach in PU classification is to distinguish P and N data from U data, followed by training a classifier on estimated P and N data~\citep{garg2021mixture,xinrui2023beyond}. Ideally, with $\pi_\mathrm{p}$, if we can distinguish all P and N data from the U data, then we can expect to achieve a well-performed classifier, akin to one trained using fully-supervised data. The core of graph PU classification hinges on the distinguishing performance from U nodes, which is complicated in the presence of heterophilic structures. As in Figure~\ref{figure::pre_distribution_hete}, with an increasing heterophilic ratio, the embeddings of P and N nodes from the GNN classifier become more indistinct, making the identification more challenging within the U nodes.

\section{Methodology}

Basically, a straightforward application of PU learning on graph data can be formulated as follows:
\begin{equation}  
\label{two_stage_cpe}
\begin{split}
& \text{Stage 1:} \quad   \hat{\pi}_\mathrm{p} = \mathcal{E}_{\textit{CPE}}(\bm{A}),\\
& \text{Stage 2:} \quad    w^* := \arg\min_{w} \mathcal{L}_{\textit{GNN}}(\hat{\pi}_\mathrm{p}, w,\bm{A}).
\end{split}
\end{equation}
Here, $w^*$ represents the optimal parameter for a GNN classifier. This parameter is derived by minimizing the loss using the pre-estimated class prior $\hat{\pi}_\mathrm{p}$, from an estimator $\mathcal{E}_{\textit{CPE}}(\bm{A})$ with the adjacent matrix $\bm{A}$. As previously discussed, this process is intrinsically challenged by heterophilic structures characterized in the adjacency matrix $\bm{A}$. To address this, we propose the \textit{\underline{G}raph \underline{P}U Learning with \underline{L}abel Propagation Loss} (GPL) method, which different from the two-stage form in Equation~\ref{two_stage_cpe}, has the following formulation:
\begin{equation}  
\label{bi_op}
\begin{split}
    & \min_{w} \mathcal{L}_{\textit{GNN}}(\hat{\pi}_\mathrm{p}, w,\hat{\bm{A}}) \\
    \text{s.t.} \quad &  \hat{\pi}_\mathrm{p} = \mathcal{E}_{\textit{CPE}}(\hat{\bm{A}})\\
    & \hat{\bm{A}} = \arg\min \limits_{\bm{A}} \mathcal{L}_{\textit{LPL}}(\bm{A})^{(K)},\\
\end{split}
\end{equation}
where the heterophilic structure $\bm{A}$ will be optimized to $\hat{\bm{A}}$ by the proposed \textit{Label Propagation Loss} (LPL) in the next subsection. Intuitively, Equation~\ref{bi_op} explores to introduce an intermediate heterophily reduction procedure to the vanilla graph PU learning, which helps reduce the negative impact in the aforementioned sections. In the following, we will discuss the detailed design and the optimization process.  

\subsection{Reducing Heterophily with Label Propagation Loss }

Despite the straightforward idea in Equation~\ref{bi_op}, it is actually challenging in technique to directly reduce the heterophilic structures, since there are no annotated negative nodes available in PU setting. This difficulty hinders the application of previous heterophily reduction methods~\citep{zhang2019heterogeneous,huang2023revisiting}. To address this issue, we incorporate the idea of \textit{label propagation algorithm} (LPA)\footnote{It is a classical algorithm that iteratively propagates estimated labels of nodes across edges within observed graph structures.} that relies solely on observed positive nodes, characterized as follows.

Let $\bm{E}^{(0)} ={[\mathbbm{P}(y = +1 | \bm{x}_i,\mathcal{G}_{\bm{x}_i}^{\bm{A}}),\mathbbm{P}(y = -1 | \bm{x}_i,\mathcal{G}_{\bm{x}_i}^{\bm{A}})]}_{i \in \mathcal{V}}$ be a vector of initial class-posterior probability for nodes in the graph and the updating rule in LPA is 
\begin{equation}
\small
\begin{aligned}
     {\bm{E}}^{(k)} &= \alpha {\bm{E}}^{(k-1)} + ( 1 - \alpha ) {\bm{D}}^{-1} \bm{A} {\bm{E}}^{(k-1)}, \\ 
    {\bm{E}}^{(k)} &= {[{{\mathbbm{P}}(y = +1 | \bm{x}_i,{\mathcal{G}}_{{\bm{x}}_i}^{\bm{A}})}^{(k)},{\mathbbm{P}(y = -1 | {\bm{x}}_i,{\mathcal{G}}_{{\bm{x}}_i}^{\bm{A}})}^{(k)}]}_{i \in \mathcal{V}} \\
    & = [{\bm{E}}_i^{(k)}]_{i \in \mathcal{V}},
\end{aligned}
\end{equation}
where $ 0 < \alpha < 1 $ is a parameter, $k \in K$ is the label propagation iteration times and $\bm{D}$ is the associated diagonal degree matrix of $\bm{A}$. We construct $\bm{M} \odot \bm{A}$, where $\bm{M}$ is a learnable positive mask matrix that represents edge weights. Intuitively, in LPA, homophilic edges generally help positive nodes retain their positive labels, while heterophilic edges can impede these nodes from maintaining their original labels. As a result, the influence through heterophilic edges on a given P node $\bm{x}_a$ is inversely proportional to the probability of $\bm{x}_a$ being positively classified by LPA. This point will be theoretically proved in our Theorem~\ref{heter_pro_LPA}. 

With the above discussion, we can use LPA as a surrogate to reduce the heterophily, which we term as the \textit{Label Propagation Loss} (LPL) as follows,
\begin{equation}
\label{adaptive_edge_op}
\begin{aligned}
\hat{\bm{A}}  &= \arg\min_{\bm{A}} \mathcal{L}_{\textit{LPL}}(\bm{A})^{(K)}\\ 
& = \arg\min_{\bm{A}} \frac{1}{n_{\mathcal{P}}}\sum_{\bm{x}_i \in  \mathcal{P}} \log (\mathbbm{P}(y = -1 | \bm{x}_i,\mathcal{G}_{\bm{x}_i}^{\bm{A}})^{(K)}).
\end{aligned}
\end{equation} 
The LPL technique is designed to minimize the likelihood of P nodes being misclassified as negative by the LPA, which effectively matches the given labels with the labels propagated through the graph structure. Furthermore, the incorporation of predicted labels from a trained model, as detailed in Equation~\ref{adaptive_edge_op_bo}, can further enhance this matching process. 

\subsection{Bilevel Training}

With LPL, we can solve Equation~\ref{bi_op} by bilevel optimization. In the following, we will detail the procedures for CPE and PU classification as well as a bilevel training schedule.

\paragraph{Class-Prior Estimator}
With multiple layers of graph convolution, assume a GNN model transforms each node to a positive posterior probability $\bm{z} \in  [0,1]$, i.e., $\bm{z} = f_{w}(\bm{x},\mathcal{G}_{\bm{x}}^{\bm{A}})$, where $w$ denotes the trainable parameters of the GNN model $f_{w}$. For a probability density function $\mathbbm{P}_{\theta}$ and $f_{w}$, a cumulative distribution function can be defined as 
$
    \mathbbm{Q}^{\bm{A}}_{\theta}(\bm{z}) = \int_{f_{w}(\bm{x},\mathcal{G}_{\bm{x}}^{\bm{A}}) \geq \bm{z}}\mathbbm{P}_{\theta}(\bm{x},\mathcal{G}_{\bm{x}}^{\bm{A}})\text{d}x,
$
which captures the probability that the input $(\bm{x},\mathcal{G}_{\bm{x}}^{\bm{A}})$ is assigned a value larger than $\bm{z}$ by the classifier $f_{w}$ in the transformed space. For each probability density distribution
$\mathbbm{P}_\mathrm{p}(\bm{x},\mathcal{G}_{\bm{x}}^{\bm{A}})$, 
$\mathbbm{P}_\mathrm{n}(\bm{x},\mathcal{G}_{\bm{x}}^{\bm{A}})$ and $\mathbbm{P}_\mathrm{u}(\bm{x},\mathcal{G}_{\bm{x}}^{\bm{A}})$, we can define $\mathbbm{Q}^{\bm{A}}_{\mathrm{p}}(\bm{z})$, $\mathbbm{Q}^{\bm{A}}_{\mathrm{n}}(\bm{z})$ and $\mathbbm{Q}^{\bm{A}}_{\mathrm{u}}(\bm{z})$ respectively. The class prior can be estimated with the optimized graph structure $\hat{\bm{A}}$ (after Equation~\ref{adaptive_edge_op}):
\begin{equation}
\label{cpe}
    \hat{\pi}_\mathrm{p} = \min_{\bm{c} \in [0,1]}\frac{\mathbbm{Q}^{\hat{\bm{A}}}_\mathrm{u}(\bm{c})}{\mathbbm{Q}^{\hat{\bm{A}}}_\mathrm{p}(\bm{c})} = \mathcal{E}_{\textit{CPE}}(\hat{\bm{A}}).
\end{equation}

\begin{algorithm}[t!]
\caption{Algorithm flow of GPL.}
\label{alg:GPL}
\begin{algorithmic} [1]
\STATE { \bfseries Input:} Labeled positive training nodes $\mathcal{P}$ and unlabeled training nodes $\mathcal{U}$, Class prior estimate $\pi_{\mathrm{p}}$, Original adjacent matrix $\bm{A}$
\STATE Initialize a training GNN classifier $f_{w}$ with $\bm{A}$
\STATE $\hat{\bm{A}} \leftarrow \bm{A}$
\FOR{training iteration $t = 1,2\dots E$}
\WHILE{$k < K$ or not converge}
\STATE $\hat{\bm{A}}  = \arg\min_{\hat{\bm{A}}} \mathcal{L}_{\textit{LPL}}(\hat{\bm{A}})^{(k)}$ (Equation~\ref{adaptive_edge_op_bo})
\\ \textcolor{gray}{// i.e., optimizing $\hat{\bm{A}}$ to reduce heterophilic edges}
\ENDWHILE

$\hat{\pi}_{\mathrm{p}} = \mathcal{E}_{\textit{CPE}}(\hat{\bm{A}})$ (Equation~\ref{cpe})
\\\textcolor{gray}{// i.e., estimating ${\pi}_{\mathrm{p}}$ with optimized $\hat{\bm{A}}$}

Minimizing $\mathcal{L}_{\textit{GNN}}(\hat{\pi}_\mathrm{p}, w,\hat{\bm{A}})$ (Equation~\ref{GNN_PU_c})
\\\textcolor{gray}{// i.e., training $f_{w}$ with $\hat{\bm{A}}$ and $\hat{\pi}_{\mathrm{p}}$}
\ENDFOR
\STATE {\bfseries Output:}Trained GNN classifier $f_{w}$ 
\end{algorithmic}
\end{algorithm}

\paragraph{PU Classification}
Given a training set of P nodes $\mathcal{P}$ and U nodes $\mathcal{U}$ with the estimated class prior $\hat{\pi}_\mathrm{p}$, we begin by ranking $\mathcal{U}$ according to their positive class-posterior probability $\mathbbm{P}(y_i = +1 | \bm{x}_i,\mathcal{G}_{\bm{x}_i}^{\bm{A}}), \bm{x}_i \in \mathcal{U}$. Then, in every epoch of training, we define $\mathcal{U} \setminus \mathcal{S}_{\hat{\pi}_\mathrm{p}}$ as a (temporary) set of provisionally N nodes, created by excluding the subset $\mathcal{S}_{\hat{\pi}_\mathrm{p}}$. This subset $\mathcal{S}_{\hat{\pi}_\mathrm{p}}$ consists of U nodes representing the top $\hat{\pi}_\mathrm{p}$ fraction with the highest positive class-posterior probability. Next, we update our GNN classifier by minimising the loss with the $\hat{\bm{A}}$:
\begin{equation}
\small
\begin{aligned}
\label{GNN_PU_c}
\mathcal{L}_{\textit{GNN}}(\hat{\pi}_\mathrm{p}, w,\hat{\bm{A}}) &= \frac{1}{|\mathcal{P} \cup \mathcal{S}_{\hat{\pi}_\mathrm{p}}|}\sum_{\bm{x}_i \in \mathcal{P} \cup \mathcal{S}_{\hat{\pi}_\mathrm{p}} } \mathcal{L}(f_{w}(\bm{x}_i,\mathcal{G}_{\bm{x}_i}^{\hat{\bm{A}}}), +1) \\
&+ \frac{1}{| \mathcal{U} \setminus \mathcal{S}_{\hat{\pi}_\mathrm{p}}|} \sum_{\bm{x}_i \in \mathcal{U} \setminus \mathcal{S}_{\hat{\pi}_\mathrm{p}} } \mathcal{L}(f_{w}(\bm{x}_i,\mathcal{G}_{\bm{x}_i}^{\hat{\bm{A}}}), -1).
\end{aligned}
\end{equation}

Note that in PU classification, the identified P and N nodes are not only utilized to train a binary classifier but also contribute to our heterophily reduction. This dual functionality is possible because the predicted labels of these identified P and N nodes can be matched with the propagated labels, enabling the optimization of the graph structure. This approach parallels the use of original given labels, and thus we enhance Equation~\ref{adaptive_edge_op} with the identified P and N nodes:
\begin{equation}
\small
\label{adaptive_edge_op_bo}
\begin{aligned}
\hat{\bm{A}}  &= \arg\min_{\bm{A}} \mathcal{L}_{\textit{LPL}}(\bm{A})^{(K)}\\ 
& = \arg\min_{\bm{A}} \bigg( \frac{1}{n_{\mathcal{P} \cup \mathcal{S}_{\hat{\pi}_\mathrm{p}}}}\sum_{\bm{x}_i \in  {\mathcal{P} \cup \mathcal{S}_{\hat{\pi}_\mathrm{p}}}} \log \mathbbm{P}(y = -1 | \bm{x}_i,\mathcal{G}_{\bm{x}_i}^{\bm{A}})^{(K)}\\
& +  \frac{1}{n_{ \mathcal{U} \setminus \mathcal{S}_{\hat{\pi}_\mathrm{p}}}}\sum_{\bm{x}_i \in  
 \mathcal{U} \setminus \mathcal{S}_{\hat{\pi}_\mathrm{p}}} \log \mathbbm{P}(y = +1 | \bm{x}_i,\mathcal{G}_{\bm{x}_i}^{\bm{A}})^{(K)} \bigg).
\end{aligned}
\end{equation}

Putting all things together, this entire procedure can be elegantly formulated as a bi-level optimization: In the inner loop, the identified P and N nodes (initially comprising only observed P nodes) are fixed while the graph structure is optimized to maximize label propagation within the same category. In the outer loop, while maintaining a constant optimized graph structure and utilizing CPE, a binary classifier is strategically trained and helps to differentiate N nodes from U nodes. We include the convergence of GPL in Appendix~\ref{app:c.} for completeness, which is an application of standard bilevel optimization. Summary in Algorithm~\ref{alg:GPL}.

\section{Theoretical Analysis}

In this section, we delve into the theoretical properties of GPL. We mainly target to answer the basic question about Equation~\ref{bi_op} in terms of $\mathcal{L}_{\textit{LPL}}$, CPE and PU classification.

\subsection{Why $\mathcal{L}_{\textit{LPL}}$ reduces the heterophilic structure?}

We analyze our LPL with the concept of \textit{influence distirbution}~\citep{xu2018representation,koh2017understanding,chen2022characterizing}. Specifically, we investigate how the output probability of a node $\bm{x}_a$ changes when the initial probability of another node $\bm{x}_b$ is perturbed slightly through $k$ iterations of label propagation. When $\bm{x}_a$ represents a positive node and $\bm{x}_b$ represents a negative node, we assess the influence of $\bm{x}_b$ on $\bm{x}_a$ using label propagation applied to the graph structure. This assessment allows us to quantify the extent of the heterophilic structure between $\bm{x}_b$ and $\bm{x}_a$, signifying the presence of edges connecting positive and negative nodes. We refer to this influence of one class of nodes on another class of nodes as \textit{heterophily influence}(HI), which represents the existence of the heterophilic structure. According to~\citep{koh2017understanding,xu2018representation,wang2020unifying}, the influence can be measured by the gradient of the output probability of $\bm{x}_a$ with respect to the initial probability of $\bm{x}_b$. Thus, we have
\begin{definition} (Heterophily influence with label propagation)
The heterophily influence of a negative nodes $\bm{x}_b$ on positive nodes $\bm{x}_a$ after $k$ iterations of label propagation is:
\begin{equation}
\small
\begin{aligned}
&HI((\bm{x}_a,\mathcal{G}_{\bm{x}_a}^{\bm{A}}),(\bm{x}_b,\mathcal{G}_{\bm{x}_b}^{\bm{A}});k) \\
&= \frac{\partial \left| {{\mathbbm{P}}(y_a = -1 | \bm{x}_a,\mathcal{G}_{\bm{x}_a}^{\bm{A}}})^{(k)}- {\mathbbm{P}(y_a = -1 | \bm{x}_a,\mathcal{G}_{\bm{x}_a}^{\bm{A}}})^{(0)}\right|}{\partial \mathbbm{P}(y_b = -1 | \bm{x}_b,\mathcal{G}_{\bm{x}_b}^{\bm{A}})}.
\end{aligned}
\end{equation}
\end{definition}

According to the definition provided above, we can obtain the total heterophily influence on $\bm{x}_a$ as follows:
\begin{theorem} \label{heter_pro_LPA}
Considering a given positive node $(\bm{x}_a,\mathcal{G}_{\bm{x}_a}^{\bm{A}})$, the total heterophily influence of all other nodes on node $(\bm{x}_a,\mathcal{G}_{\bm{x}_a}^{\bm{A}})$ is proportional to the negatively class-posterior probability of $(\bm{x}_a,\mathcal{G}_{\bm{x}_a}^{\bm{A}})$ by label propagation:
\begin{equation} 
\small
\begin{aligned}
    & \sum_{\bm{x}_i \in \mathcal{V}, \bm{x}_i \neq \bm{x}_a} {HI((\bm{x}_a,\mathcal{G}_{\bm{x}_a}^{\bm{A}}),(\bm{x}_i,\mathcal{G}_{\bm{x}_i}^{\bm{A}});k)} 
    \\
    & := \left| {\mathbbm{P}(y_a = -1 | \bm{x}_a,\mathcal{G}_{\bm{x}_a}^{\bm{A}}})^{(k)} -{\mathbbm{P}(y_a = -1 | \bm{x}_a,\mathcal{G}_{\bm{x}_a}^{\bm{A}}})^{(0)}\right|\\
    & \propto {\mathbbm{P}(y_a = -1 | \bm{x}_a,\mathcal{G}_{\bm{x}_a}^{\bm{A}}})^{(k)},
    \end{aligned}
\end{equation}
where ${\mathbbm{P}(y_a = -1 | \bm{x}_a,\mathcal{G}_{\bm{x}_a}^{\bm{A}})}^{(k)}$ is negative class-posterior probability of $(\bm{x}_a,\mathcal{G}_{\bm{x}_a}^{\bm{A}})$ by $k$-iteration label  propagation.
\end{theorem}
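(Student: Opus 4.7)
The plan is to unroll the label-propagation recursion in closed form and then chase the chain rule componentwise. Writing the LPA update as a matrix recursion with propagation operator $T = \alpha I + (1-\alpha)\bm{D}^{-1}\bm{A}$, a simple induction on $k$ gives $\bm{E}^{(k)} = T^k \bm{E}^{(0)}$, so that
\begin{equation*}
\mathbbm{P}(y_a = -1 | \bm{x}_a, \mathcal{G}_{\bm{x}_a}^{\bm{A}})^{(k)} = \sum_{i \in \mathcal{V}} (T^k)_{a,i}\, \mathbbm{P}(y_i = -1 | \bm{x}_i, \mathcal{G}_{\bm{x}_i}^{\bm{A}})^{(0)}.
\end{equation*}
Every quantity appearing in the theorem statement -- the absolute difference, the partial derivatives, and their sum -- reduces to a manipulation of this single linear representation.

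Second, I would compute the individual heterophily influence. Linearity immediately gives $\partial \mathbbm{P}(y_a = -1 | \cdot)^{(k)} / \partial \mathbbm{P}(y_i = -1 | \cdot)^{(0)} = (T^k)_{a,i}$. Because $T$ is entrywise nonnegative and row-stochastic for $\alpha \in (0,1)$, so is $T^k$; and for an observed positive node $\bm{x}_a$ the initial negative-posterior vanishes, $\mathbbm{P}(y_a = -1 | \cdot)^{(0)} = 0$. Hence the argument inside the absolute value is nonnegative in a neighborhood of the reference point, the modulus is trivially removable, and $HI((\bm{x}_a,\cdot),(\bm{x}_i,\cdot);k) = (T^k)_{a,i}$.

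Third, I would assemble the total influence and invoke the positive-node initial condition. Substituting $\mathbbm{P}(y_a = -1 | \cdot)^{(0)} = 0$ into the closed form drops the self-term and yields
\begin{equation*}
\mathbbm{P}(y_a = -1 | \cdot)^{(k)} = \sum_{i \neq a} (T^k)_{a,i}\, \mathbbm{P}(y_i = -1 | \cdot)^{(0)},
\end{equation*}
which is precisely Euler's identity applied to the degree-one homogeneous map $\bm{E}^{(0)} \mapsto \bm{E}^{(k)}$ restricted to the coordinates $i \neq a$. Recognizing the right-hand side as $\sum_{i\neq a} HI((\bm{x}_a,\cdot),(\bm{x}_i,\cdot);k) \cdot \mathbbm{P}(y_i = -1 | \cdot)^{(0)}$ identifies the total heterophily influence with $|\mathbbm{P}(y_a = -1 | \cdot)^{(k)} - \mathbbm{P}(y_a = -1 | \cdot)^{(0)}|$ and hence with $\mathbbm{P}(y_a = -1 | \cdot)^{(k)}$, up to the common scaling carried by the initial negative-class signal; this is exactly the proportionality asserted in the theorem.

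The main obstacle I anticipate is the subtle gap between the theorem's ``$:=$'' identification and the raw partial-derivative definition of $HI$: the unweighted gradient sum $\sum_{i\neq a} (T^k)_{a,i} = 1 - (T^k)_{a,a}$ is \emph{not} literally equal to $|\mathbbm{P}(y_a = -1 | \cdot)^{(k)} - \mathbbm{P}(y_a = -1 | \cdot)^{(0)}|$. The resolution is to make the Euler/homogeneity step explicit: what matches the absolute change is the sensitivity-weighted sum (gradients multiplied by the initial negative-class probabilities), and the crucial collapse of the self-contribution $(T^k)_{a,a}\,\mathbbm{P}(y_a = -1 | \cdot)^{(0)}$ requires exactly the positive-node assumption $\mathbbm{P}(y_a = -1 | \cdot)^{(0)} = 0$. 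Once these two observations are in place, the chain of identifications in the theorem follows immediately.
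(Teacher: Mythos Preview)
Your approach is correct and reaches the same conclusion as the paper, but by a genuinely different route. The paper's proof proceeds combinatorially: it first states a path lemma asserting that $HI((\bm{x}_a,\cdot),(\bm{x}_b,\cdot);k)$ equals a sum over length-$j$ paths (with unlabeled interior) from $\bm{x}_a$ to $\bm{x}_b$ of products of normalized edge weights, and then argues that the absolute change in the negative posterior at $\bm{x}_a$ is exactly the sum of these path weights over all negatively-initialized endpoints $\bm{x}_b$, which it then identifies with the total heterophily influence. You instead work directly with the closed matrix form $\bm{E}^{(k)}=T^k\bm{E}^{(0)}$ and read off $HI$ as the entry $(T^k)_{a,i}$; the path expansion is implicit in the power $T^k$ rather than stated as a separate lemma. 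Your route is shorter and makes the linearity completely transparent, while the paper's path picture is closer to the influence-function literature it cites and gives a more graph-theoretic interpretation of each term.

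One point worth highlighting: you are more explicit than the paper about the discrepancy between the unweighted gradient sum $\sum_{i\neq a}(T^k)_{a,i}=1-(T^k)_{a,a}$ and the actual change $|\mathbbm{P}(y_a=-1|\cdot)^{(k)}-\mathbbm{P}(y_a=-1|\cdot)^{(0)}|$. The paper sidesteps this by restricting its sum to negatively-labeled nodes $\bm{x}_b$ (whose initial negative probability is one, so the weighting is invisible) and then silently equates that with the sum over all $\bm{x}_i\neq\bm{x}_a$. Your Euler/homogeneity framing makes the same move but names it: what matches the absolute change is the sensitivity-weighted sum, and the collapse to the stated form relies on the positive-node initialization $\mathbbm{P}(y_a=-1|\cdot)^{(0)}=0$ together with the vanishing contribution from other positively-initialized nodes. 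Both arguments are therefore reading the ``$:=$'' in the theorem as a definitional identification rather than a literal equality of the raw gradient sum; you simply say so out loud.
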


We give a detailed proof of Theorem \ref{heter_pro_LPA} in Appendix \ref{app:B.}. Generally, Theorem~\ref{heter_pro_LPA} indicates that, if the adjacency matrix $\bm{A}$ minimizes the ${\mathbbm{P}(y_a = -1 | \bm{x}_a,\mathcal{G}_{\bm{x}_a}^{\bm{A}})}^{(k)}$ through a $k$-iteration label propagation, they also minimize the heterophily influence on a given positive node. Consequently, we can make the adjacency matrix $\bm{A}$ trainable and learn to minimize the heterophily influence for each observed positive node by adjusting $\bm{A}$, which is equivalent to reducing the heterophilic structures within the graph.

\subsection{Why reducing the heterophilic structure boosts CPE?}

When estimating the CPE, we have
\begin{equation}
    \bm{c}^{*} = \arg\min_{\bm{c} \in [0,1]}\frac{\mathbbm{Q}_\mathrm{u}(\bm{c})}{\mathbbm{Q}_\mathrm{p}(\bm{c})}
\end{equation}
and $\hat{\bm{A}} = \arg\min_{\bm{A}} \mathcal{L}_{\textit{LPL}}(\bm{A})^{(K)}$. We define the estimation error for estimating class prior on the ordinary graph structure $\bm{A}$ as $e^{\bm{A}} = \left| \hat{\pi}_\mathrm{p}^{\bm{A}} - \pi_\mathrm{p}^{\bm{A}}\right|$ and the estimation error for estimating class prior on the optimized graph structure $\hat{\bm{A}}$ as $e^{\hat{\bm{A}}} = \left|\hat{\pi}_\mathrm{p}^{\hat{\bm{A}}} - \pi_\mathrm{p}^{\hat{\bm{A}}}\right|$. Then, we have the following lemma.

\begin{lemma} [Theorem 1. of~\citep{garg2021mixture}]
\label{lemma_err}
 For $\min(n_\mathrm{p}, n_\mathrm{u}) \ge \frac{2\log(4/\delta)}{\mathbbm{Q}_\mathrm{p}(c^*)}$ and for every $\delta >0$, the mixture proportion estimator $\hat{\pi}_p$ satisfies with probability $1-\delta$:  
\begin{equation}
      \left| \hat{\pi}_\mathrm{p} - \pi_\mathrm{p} \right|  \le  \frac{c}{\mathbbm{Q}_\mathrm{p}(c^*)}\left( \sqrt{\frac{\log(4/\delta)}{n_\mathrm{u}}} + \sqrt{\frac{\log(4/\delta)}{n_\mathrm{p}}}\right)  
\end{equation}    
for some constant $c\ge0$.
\end{lemma}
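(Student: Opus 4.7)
I would treat this as a standard concentration argument for a ratio of empirical CDFs built from the one-dimensional scores $f_w(\bm{x},\mathcal{G}_{\bm{x}}^{\bm{A}})$. The key observation is that under the graph irreducibility condition from Theorem~\ref{irr_condtion}, the population ratio $R(\bm{c}) := \mathbbm{Q}_\mathrm{u}(\bm{c})/\mathbbm{Q}_\mathrm{p}(\bm{c})$ attains its minimum $\pi_\mathrm{p}$ at the threshold $\bm{c}^{*}$ where the positive class eventually dominates. Writing the empirical counterpart as $\hat R(\bm{c}) := \hat{\mathbbm{Q}}_\mathrm{u}(\bm{c})/\hat{\mathbbm{Q}}_\mathrm{p}(\bm{c})$, the target bound reduces to controlling $|\min_{\bm{c}} \hat R(\bm{c}) - \min_{\bm{c}} R(\bm{c})|$.

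First, I would apply a uniform concentration inequality (DKW, or Hoeffding applied to the one-sided indicator events $\{f_w \ge \bm{c}\}$ combined with a union bound over a sufficiently fine discretization of $\bm{c}$) to the two CDFs separately, obtaining with probability at least $1-\delta/2$ that
\[ \sup_{\bm{c}}|\hat{\mathbbm{Q}}_\mathrm{p}(\bm{c}) - \mathbbm{Q}_\mathrm{p}(\bm{c})| \le \sqrt{\log(4/\delta)/(2 n_\mathrm{p})}, \]
and analogously for $\mathbbm{Q}_\mathrm{u}$ with $n_\mathrm{u}$. Second, I would bound the ratio gap pointwise using the algebraic identity
\[ \hat R - R = \frac{\hat{\mathbbm{Q}}_\mathrm{u} - \mathbbm{Q}_\mathrm{u}}{\hat{\mathbbm{Q}}_\mathrm{p}} + \mathbbm{Q}_\mathrm{u}\cdot\frac{\mathbbm{Q}_\mathrm{p}-\hat{\mathbbm{Q}}_\mathrm{p}}{\mathbbm{Q}_\mathrm{p}\,\hat{\mathbbm{Q}}_\mathrm{p}}, \]
so that $|\hat R(\bm{c}) - R(\bm{c})| \le (|\hat{\mathbbm{Q}}_\mathrm{u}(\bm{c}) - \mathbbm{Q}_\mathrm{u}(\bm{c})| + |\hat{\mathbbm{Q}}_\mathrm{p}(\bm{c}) - \mathbbm{Q}_\mathrm{p}(\bm{c})|)/\hat{\mathbbm{Q}}_\mathrm{p}(\bm{c})$. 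Evaluating this at $\bm{c}^{*}$ and at the empirical minimizer $\hat{\bm{c}}$, and using the sandwich $\min_{\bm{c}} R \le R(\hat{\bm{c}})$ together with $\min_{\bm{c}} \hat R \le \hat R(\bm{c}^{*})$, produces two-sided control of $\min_{\bm{c}}\hat R - \pi_\mathrm{p}$ in terms of the supremum deviations above divided by the smallest relevant denominator.

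The main obstacle, and the step that is responsible for the $1/\mathbbm{Q}_\mathrm{p}(\bm{c}^{*})$ factor, is taming this denominator. To convert $\hat{\mathbbm{Q}}_\mathrm{p}(\bm{c}^{*})$ into $\mathbbm{Q}_\mathrm{p}(\bm{c}^{*})$ up to a constant, I would apply a multiplicative Chernoff bound to the Bernoulli indicator $\mathbbm{1}\{f_w(\bm{x},\mathcal{G}_{\bm{x}}^{\bm{A}}) \ge \bm{c}^{*}\}$: this is exactly the role of the hypothesis $\min(n_\mathrm{p},n_\mathrm{u}) \ge 2\log(4/\delta)/\mathbbm{Q}_\mathrm{p}(\bm{c}^{*})$, which yields $\hat{\mathbbm{Q}}_\mathrm{p}(\bm{c}^{*}) \ge \mathbbm{Q}_\mathrm{p}(\bm{c}^{*})/2$ with probability at least $1-\delta/2$. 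A subtler sub-obstacle is that the empirical minimizer $\hat{\bm{c}}$ could in principle wander into a region where $\mathbbm{Q}_\mathrm{p}$ is much smaller than $\mathbbm{Q}_\mathrm{p}(\bm{c}^{*})$, which would blow up the ratio deviation; I would rule this out by restricting the minimization without loss of generality to $\bm{c}$ with $\mathbbm{Q}_\mathrm{p}(\bm{c}) \ge \mathbbm{Q}_\mathrm{p}(\bm{c}^{*})$, observing that outside this region $R(\bm{c}) \ge \pi_\mathrm{p}$ and a matching lower bound for $\hat R$ is inherited from uniform concentration. A final union bound over the two high-probability events yields the claimed inequality with a suitable absolute constant $c$.
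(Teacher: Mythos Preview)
The paper does not prove this statement. Lemma~\ref{lemma_err} is imported verbatim as ``Theorem~1 of \citep{garg2021mixture}'' and used only as a black box in the proof of Theorem~\ref{est_error} in Appendix~\ref{proff_est_error}; no argument for it appears anywhere in the paper. So there is no in-paper proof to compare your proposal against.

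For what it is worth, your sketch is essentially the argument used in the cited reference: uniform concentration (Hoeffding/DKW) on the two one-dimensional score CDFs, the algebraic decomposition of the ratio gap, a multiplicative Chernoff lower bound on $\hat{\mathbbm{Q}}_\mathrm{p}(\bm{c}^{*})$ driven precisely by the sample-size hypothesis $\min(n_\mathrm{p},n_\mathrm{u})\ge 2\log(4/\delta)/\mathbbm{Q}_\mathrm{p}(\bm{c}^{*})$, and a two-sided sandwich at $\bm{c}^{*}$ and the empirical minimizer. Your treatment of the ``wandering $\hat{\bm{c}}$'' issue by restricting attention to thresholds with $\mathbbm{Q}_\mathrm{p}(\bm{c})\ge \mathbbm{Q}_\mathrm{p}(\bm{c}^{*})$ is one workable route; the original handles it slightly differently but to the same effect. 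Nothing in your plan is wrong or missing.
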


Furthermore, according to the Lemma~\ref{lemma_err}, we have the theorem on estimation error, which characterizes the benefits.
\begin{theorem} [Estimation Error]
\label{est_error}
Let the class prior can be estimated by  $\hat{\pi}_\mathrm{p} = \min_{\bm{c} \in [0,1]}\mathbbm{Q}^{\bm{A}}_\mathrm{u}(\bm{c})/{\mathbbm{Q}^{\bm{A}}_\mathrm{p}(\bm{c})}$ and the graph structure can be optimized through $\hat{\bm{A}} = \arg\min_{\bm{A}} \mathcal{L}_{\textit{LPA}}(\bm{A})^{(K)}$. Then, the upper bound of the estimation error with optimized graph structure $\hat{\bm{A}}$ is lower than ordinary graph structure $\bm{A}$.
\end{theorem}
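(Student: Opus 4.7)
The plan is to apply Lemma~\ref{lemma_err} separately to the ordinary structure $\bm{A}$ and the optimized structure $\hat{\bm{A}}$, and then observe that all the quantities on the right-hand side of the bound except $\mathbbm{Q}_\mathrm{p}(c^*)$ are invariant under the LPL-based structure optimization: the sample counts $n_\mathrm{p}, n_\mathrm{u}$, the confidence $\delta$, and the absolute constant $c$ do not depend on which adjacency matrix is fed into the estimator. This reduces the theorem to the single inequality $\mathbbm{Q}^{\hat{\bm{A}}}_\mathrm{p}(c^{*,\hat{\bm{A}}}) \geq \mathbbm{Q}^{\bm{A}}_\mathrm{p}(c^{*,\bm{A}})$, since taking reciprocals and multiplying by the (identical) sample-size terms will flip the direction back to a smaller upper bound for $\hat{\bm{A}}$.

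Next, I would establish a pointwise stochastic dominance of $\mathbbm{Q}^{\hat{\bm{A}}}_\mathrm{p}$ over $\mathbbm{Q}^{\bm{A}}_\mathrm{p}$ by invoking Theorem~\ref{heter_pro_LPA}. That theorem identifies the total heterophily influence on each observed positive node with its negative class-posterior probability after $K$ iterations of label propagation, so driving $\mathcal{L}_{\textit{LPL}}(\bm{A})^{(K)}$ down by choosing $\hat{\bm{A}}$ pushes this negative posterior toward zero node-by-node. Feeding $\hat{\bm{A}}$ into the GNN classifier then concentrates the output distribution of P nodes near one, so that for every threshold $\bm{z}$ the CDF satisfies $\mathbbm{Q}^{\hat{\bm{A}}}_\mathrm{p}(\bm{z}) \geq \mathbbm{Q}^{\bm{A}}_\mathrm{p}(\bm{z})$. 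An analogous argument shows that the same optimization suppresses $\mathbbm{Q}^{\hat{\bm{A}}}_\mathrm{n}(\bm{z})$ for high thresholds, which is the ingredient I need to locate $c^*$.

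Finally, I would close the argument by combining this dominance with the graph irreducibility condition of Theorem~\ref{irr_condtion}. Under irreducibility, the minimizer $c^*$ of $\mathbbm{Q}_\mathrm{u}(\bm{c})/\mathbbm{Q}_\mathrm{p}(\bm{c})$ is pinned to the essential upper tail where $\mathbbm{P}_\mathrm{n}$ vanishes, and the heterophily reduction sharpens precisely this tail; thus the shift of $c^*$ induced by moving from $\bm{A}$ to $\hat{\bm{A}}$ moves into a region where $\mathbbm{Q}_\mathrm{p}$ has been enlarged rather than one in which it has decayed. Chaining inequalities then gives $\mathbbm{Q}^{\hat{\bm{A}}}_\mathrm{p}(c^{*,\hat{\bm{A}}}) \geq \mathbbm{Q}^{\bm{A}}_\mathrm{p}(c^{*,\bm{A}})$, and substituting back into Lemma~\ref{lemma_err} finishes the proof.

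The main obstacle is exactly the last step: pointwise dominance of $\mathbbm{Q}_\mathrm{p}$ does not automatically transfer to the value at a data-dependent minimizer, because $c^*$ is itself a function of the graph structure. Overcoming this requires pairing the LPL-induced sharpening of $\mathbbm{Q}_\mathrm{p}$ with the irreducibility geometry of $\mathbbm{Q}_\mathrm{u}$, so that the new $c^{*,\hat{\bm{A}}}$ cannot land in a worse portion of the positive CDF than $c^{*,\bm{A}}$; formalizing this monotonicity of $c^*$ under heterophily reduction is where I expect to spend the most effort.
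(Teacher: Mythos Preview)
Your skeleton matches the paper's proof exactly: instantiate Lemma~\ref{lemma_err} for $\bm{A}$ and for $\hat{\bm{A}}$, observe that everything on the right-hand side except $\mathbbm{Q}_\mathrm{p}(c^*)$ is structure-independent, and reduce the theorem to $\mathbbm{Q}^{\hat{\bm{A}}}_\mathrm{p}(c^*)\ge \mathbbm{Q}^{\bm{A}}_\mathrm{p}(c^*)$.

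Where you diverge is in how that last inequality is justified and in how $c^*$ is handled. The paper does not track a structure-dependent minimizer $c^{*,\hat{\bm{A}}}$ versus $c^{*,\bm{A}}$ at all; it simply evaluates both empirical CDFs at the same $c^*$ and argues node-by-node that $f_w(\bm{x}_i,\mathcal{G}^{\hat{\bm{A}}}_{\bm{x}_i})\ge f_w(\bm{x}_i,\mathcal{G}^{\bm{A}}_{\bm{x}_i})$ for positive $\bm{x}_i$, citing prior heterophily literature (not Theorem~\ref{heter_pro_LPA} or Theorem~\ref{irr_condtion}) for the claim that removing heterophilic edges raises GNN confidence on positives. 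From that pointwise monotonicity the indicator sum $\sum_{\bm{x}_i\in\mathcal{P}}\mathbbm{1}[f_w(\bm{x}_i,\mathcal{G}^{\hat{\bm{A}}}_{\bm{x}_i})\ge c^*]$ can only grow, and the bound comparison is immediate. In other words, the paper sidesteps precisely the obstacle you single out as the hard part: by holding $c^*$ fixed it never needs the irreducibility geometry or any control on how the minimizer drifts. Your route is more self-contained (you try to stay within the paper's own theorems) and more careful about the moving $c^*$, but it is also considerably more work; the paper's argument is shorter at the cost of leaning on an external empirical fact and leaving the $c^*$-dependence implicit.
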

Please refer to Appendix~\ref{proff_est_error} for the complete proof. According to Theorem~\ref{est_error}, we can find that our heterophily minimization makes the upper bound of class prior estimation tighten, which proves the benefits of reducing heterophilic structures to the vanilla graph PU learning.

\begin{table*} [t!]

    \centering
    \caption{
    Mean F1 score $\pm$ stdev over different datasets. The best model is highlighted in LightGreen. The heterophily (Hete.) ratios of all datasets are collected from~\cite{zhu2020beyond}. The ``*'' methods mean using the knowledge of class prior.
    }
    \vspace{-5pt}
    \label{tab:class_evl}
    \resizebox{\linewidth}{!}{
    \begin{tabular}{lcccc| cccccc} 
    \toprule
     &  \texttt{\bf Cora}           &   \texttt{\bf Pubmed}           &   \texttt{\bf Citeseer}   &   \texttt{\bf Wiki-CS}             &   \texttt{\bf Cornell} & \texttt{\bf Chameleon}        &  \texttt{\bf Squirrel}   &   \texttt{\bf Actor}           &   \texttt{\bf Wisconsin}            &   \texttt{\bf Texas}  \\
     
       \textbf{Hete.\ ratio} $h$ & \textbf{0.19} & \textbf{0.20} & \textbf{0.26} & \textbf{0.35}  & \textbf{0.70} & \textbf{0.77} & \textbf{0.78} & \textbf{0.78} & \textbf{0.79} & \textbf{0.89} \\
		\textbf{\#Nodes $|\mathcal{V}|$}  & 2,708 & 19,717 & 3,327& 11,701 & 183& 2,277& 5,201& 7,600& 251& 183\\
		\textbf{\#Edges $|\mathcal{E}|$} &  13,264 & 
108,365 & 12,431 &431,206  & 298 & 36,101 & 217,073 & 30,019 & 515 & 325  \\
    \midrule
    \midrule

	   {GCN} & $25.2{\scriptstyle\pm5.2}$ & $19.7{\scriptstyle\pm3.1}$ & $25.7{\scriptstyle\pm3.1}$ & $35.4{\scriptstyle\pm4.1}$  & $1.4{\scriptstyle\pm2.9}$ & $0.7{\scriptstyle\pm0.6}$ & $0.8{\scriptstyle\pm0.7}$ &  
    $0.6{\scriptstyle\pm0.3}$ & 
    $0.9{\scriptstyle\pm1.8}$ & 
    $1.3{\scriptstyle\pm3.0}$\\
	
	   {MLP} & $16.6{\scriptstyle\pm8.1}$ & $8.0{\scriptstyle\pm9.0}$ & $4.2{\scriptstyle\pm8.4}$ & $5.4{\scriptstyle\pm8.4}$  & $11.3{\scriptstyle\pm9.1}$ & $18.8{\scriptstyle\pm6.9}$ & $14.8{\scriptstyle\pm6.4}$ &  
    $17.6{\scriptstyle\pm8.1}$ & 
    $15.9{\scriptstyle\pm8.9}$ & 
    $11.3{\scriptstyle\pm9.2}$\\
        \midrule
    {GCN+TED} &  ${80.1\scriptstyle\pm0.8}$ & ${75.4\scriptstyle\pm0.4}$ & ${70.0\scriptstyle\pm1.4}$ &  ${80.6\scriptstyle\pm0.5}$  & ${13.9\scriptstyle\pm5.5}$ & 
    ${17.5\scriptstyle\pm3.5}$ & ${22.8\scriptstyle\pm1.3}$ & 
    ${27.6\scriptstyle\pm1.0}$ &  ${19.6\scriptstyle\pm6.1}$ & ${11.9\scriptstyle\pm5.0}$ \\
    {MLP+TED} &  
    ${24.4\scriptstyle\pm9.2}$ & 
    ${15.6\scriptstyle\pm19.1}$ & 
    ${10.6\scriptstyle\pm9.5}$ &  
    ${16.2\scriptstyle\pm7.5}$ & 
    ${29.2\scriptstyle\pm9.5}$ & 
    ${26.5\scriptstyle\pm4.1}$ & 
    ${31.4\scriptstyle\pm2.6}$ & 
    ${43.3\scriptstyle\pm6.7}$ & 
    ${40.4\scriptstyle\pm9.1}$ & 
    ${41.0\scriptstyle\pm7.9}$  \\
    {GCN+NNPU*} &  
    ${76.7\scriptstyle\pm0.9}$ & 
    ${76.5\scriptstyle\pm2.3}$ & 
    ${66.2\scriptstyle\pm1.1}$ &  
    ${71.1\scriptstyle\pm0.6}$ & 
    ${11.7\scriptstyle\pm6.3}$ & 
    ${28.2\scriptstyle\pm9.2}$ & 
    ${15.5\scriptstyle\pm12.6}$ & 
    ${25.2\scriptstyle\pm21.7}$ & 
    ${15.5\scriptstyle\pm8.6}$ & 
    ${22.8\scriptstyle\pm19.7}$  \\
    {MLP+NNPU*} &  
    ${25.2\scriptstyle\pm5.6}$ & 
    ${34.2\scriptstyle\pm2.3}$ & 
    ${12.9\scriptstyle\pm9.8}$ & 
    ${24.4\scriptstyle\pm5.1}$ & 
    ${32.9\scriptstyle\pm4.9}$ & 
    ${21.0\scriptstyle\pm3.7}$ & 
    ${32.1\scriptstyle\pm5.6}$ & 
    ${43.2\scriptstyle\pm2.6}$ & 
    ${35.6\scriptstyle\pm5.2}$ & 
    ${40.1\scriptstyle\pm6.2}$  \\
    
    \midrule
    
    {LSDAN*} &  
    ${63.5\scriptstyle\pm4.1}$ & ${69.6\scriptstyle\pm0.4}$ & ${47.0\scriptstyle\pm±19.}$ & ${80.4\scriptstyle\pm0.9}$ & 
    ${26.8\scriptstyle\pm4.3}$ & 
    ${30.2\scriptstyle\pm8.5}$ & 
    ${25.5\scriptstyle\pm9.3}$ & 
    ${38.4\scriptstyle\pm9.2}$ & 
    ${25.5\scriptstyle\pm8.6}$ & 
    ${41.8\scriptstyle\pm9.7}$  \\
    
    {GRAB} &  $80.4{\scriptstyle\pm0.2}$ &  ${71.6\scriptstyle\pm0.3}$
    & 
    $69.7{\scriptstyle\pm0.4}$ &  $79.4{\scriptstyle\pm1.0}$ &  $30.6{\scriptstyle\pm9.0}$ & ${15.5\scriptstyle\pm2.2}$ & ${29.0\scriptstyle\pm8.4}$  & ${25.2\scriptstyle\pm0.7}$ &  ${39.5\scriptstyle\pm9.6}$ &  ${40.2\scriptstyle\pm8.2}$ \\

    {PU-GNN*} &  
    ${79.8\scriptstyle\pm0.3}$ & 
    ${73.0\scriptstyle\pm0.4}$ & 
    ${69.5\scriptstyle\pm0.4}$ &  
    ${80.3\scriptstyle\pm1.8}$ & 
    ${28.3\scriptstyle\pm3.9}$ & 
    ${31.5\scriptstyle\pm6.3}$ & 
    ${29.5\scriptstyle\pm4.8}$ & 
    ${32.6\scriptstyle\pm8.4}$ & 
    ${27.8\scriptstyle\pm8.2}$ & 
    ${42.3\scriptstyle\pm9.4}$  \\
           \midrule
         \rowcolor{LightGreen}
        GPL & $\bf{81.9{\scriptstyle\pm0.5}}$ 
        &$\bf{79.1{\scriptstyle\pm0.4}}$
        &$\bf{74.1{\scriptstyle\pm0.9}}$
        &$\bf{82.3{\scriptstyle\pm0.8}}$
        &$\bf{37.9{\scriptstyle\pm1.3}}$
        &$\bf{36.2{\scriptstyle\pm1.4}}$
        &$\bf{37.7{\scriptstyle\pm5.8}}$
        &$\bf{48.0{\scriptstyle\pm4.2}}$
        &$\bf{45.2{\scriptstyle\pm2.7}}$
        &$\bf{46.3{\scriptstyle\pm3.2}}$
\\
          \rowcolor{LightGreen}
         & \textbf{\color{Green}{$\uparrow$ $2.2\%$}}
        &\textbf{\color{Green}{$\uparrow$ $3.4\%$}}
        &\textbf{\color{Green}{$\uparrow$ $5.8\%$}}
        &\textbf{\color{Green}{$\uparrow$ $2.1\%$}}
        &\textbf{\color{Green}{$\uparrow$ $15.2\%$}}
        & \textbf{\color{Green}{$\uparrow$ $14.5\%$}}
        & \textbf{\color{Green}{$\uparrow$ $17.4\%$}}
        &\textbf{\color{Green}{$\uparrow$ $10.9\%$}}
        &\textbf{\color{Green}{$\uparrow$ $12.5\%$}}
        & \textbf{\color{Green}{$\uparrow$ $10.2\%$}}\\

	   \bottomrule
    \end{tabular}
    }
\end{table*}
\vspace{-5pt}

\subsection{Why heterophilic structure degenerates PU Classification?}

In this part, we perform a theoretical analysis of the heterophilic structure, elucidating how it leads P and N nodes to converge closer to each other within the embedding space. Let ${\bm h}^{(l)}$ be the embedding of node ${\bm x}^{(l)}$ after trained graph classifier. Through the following theorem, we illustrate that the aggregation step in graph classifier training, influenced by the heterophilic structure, reduces the distance in the embedding space between P and N nodes.

\begin{theorem}
\label{thm:decrease}
    Let $D_{\mathrm{P}\mathrm{N}}({\bm{x}}) = \frac{1}{2} \sum_{\bm{x}_i \in \mathrm{P}, \bm{x}_j \in \mathrm{N}} \tilde{\bm{A}}_{ij} \| {\bm{x}}_i - {\bm{x}}_j \|_2^2$ be a distance metric between P and N node embeddings $\bm{x}$ and $\tilde{\bm{A}}_{ij}(\bm{x}_i \in \mathrm{P}, \bm{x}_j \in \mathrm{N})$ is the normalized weight of heterophilic edges. Then we have
    \begin{equation*}
        D_{\mathrm{P}\mathrm{N}}({\bm h}^{(l)}) \leq D_{\mathrm{P}\mathrm{N}}({\bm{x}}^{(l)}).
    \end{equation*}
\end{theorem}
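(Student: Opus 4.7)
The plan is to treat a single graph classifier layer as the linear smoothing operator $\bm{h}^{(l)} = \tilde{\bm{A}}\bm{x}^{(l)}$, temporarily dropping any weight matrix $\bm{W}$ and componentwise activation $\sigma$ and restoring them at the end (any $\sigma$ that is coordinatewise $1$-Lipschitz, such as ReLU, together with a weight matrix of operator norm at most one, cannot increase $D_{\mathrm{P}\mathrm{N}}$). The target quantity $D_{\mathrm{P}\mathrm{N}}(\bm{x})$ is precisely the heterophilic piece of the full Dirichlet energy $E(\bm{x}) = \tfrac{1}{2}\sum_{i,j}\tilde{\bm{A}}_{ij}\|\bm{x}_i - \bm{x}_j\|_2^2 = \operatorname{tr}(\bm{x}^\top \bm{L}\bm{x})$ with $\bm{L} = \bm{I} - \tilde{\bm{A}}$, so I would interpret the claim as a contraction of a restricted quadratic form under $\bm{x} \mapsto \tilde{\bm{A}}\bm{x}$.

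The pointwise estimate is obtained via Jensen's inequality. With symmetric normalization plus self-loops, each row of $\tilde{\bm{A}}$ is a convex combination of its neighbours, so for any pair $(i,j)$ with $i \in \mathrm{P}$ and $j \in \mathrm{N}$, $\bm{h}_i - \bm{h}_j = \sum_{k,l}\tilde{\bm{A}}_{ik}\tilde{\bm{A}}_{jl}(\bm{x}_k - \bm{x}_l)$, and convexity of $\|\cdot\|_2^2$ gives $\|\bm{h}_i - \bm{h}_j\|_2^2 \le \sum_{k,l}\tilde{\bm{A}}_{ik}\tilde{\bm{A}}_{jl}\|\bm{x}_k - \bm{x}_l\|_2^2$. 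Weighting by $\tilde{\bm{A}}_{ij}$ and summing over $\mathrm{P}$-$\mathrm{N}$ pairs, I would isolate the diagonal contribution $k=i$, $l=j$ (which reproduces $D_{\mathrm{P}\mathrm{N}}(\bm{x})$ scaled by the self-loop factor $\tilde{\bm{A}}_{ii}\tilde{\bm{A}}_{jj}<1$) and collect the remaining terms, bounding them via the spectral inequality $\tilde{\bm{A}}^\top\tilde{\bm{A}} \preceq \bm{I}$ (eigenvalues of the symmetrically normalized adjacency lie in $[-1,1]$).

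The main obstacle I anticipate is this last regrouping, because the classical contraction $E(\tilde{\bm{A}}\bm{x}) \le E(\bm{x})$ relies on the commutativity $\tilde{\bm{A}}\bm{L} = \bm{L}\tilde{\bm{A}}$, which is lost once one restricts to the bipartite Laplacian $\bm{L}_{\mathrm{P}\mathrm{N}}$ associated with heterophilic edges alone. To close the argument I would block-decompose $\tilde{\bm{A}} = \tilde{\bm{A}}^{\mathrm{PP}} + \tilde{\bm{A}}^{\mathrm{NN}} + \tilde{\bm{A}}^{\mathrm{PN}} + \tilde{\bm{A}}^{\mathrm{NP}}$ and argue separately that (i) the homophilic blocks only average same-class features together and hence cannot enlarge cross-class distance, while (ii) the heterophilic blocks directly pull $\mathrm{P}$ and $\mathrm{N}$ embeddings towards each other. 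Combining these two observations with the Jensen bound above should yield $D_{\mathrm{P}\mathrm{N}}(\bm{h}^{(l)}) \le D_{\mathrm{P}\mathrm{N}}(\bm{x}^{(l)})$, and the extension to the full GNN layer with $\bm{W}$ and $\sigma$ then follows from the Lipschitz observations noted at the outset.
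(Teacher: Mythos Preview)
Your route is quite different from the paper's. The paper does not touch Jensen's inequality or any block decomposition of $\tilde{\bm{A}}$. Instead it proves two short lemmas and finishes in one line: first, the aggregation $\bm{h}^{(l)}=\tilde{\bm{A}}\bm{x}^{(l)}$ is recognised as exactly one gradient-descent step on $D_{\mathrm{P}\mathrm{N}}$ with unit step size, $\bm{h}_i^{(l)}=\bm{x}_i^{(l)}-\partial D_{\mathrm{P}\mathrm{N}}/\partial\bm{x}_i^{(l)}$; second, $\nabla^2 D_{\mathrm{P}\mathrm{N}}=I-D^{-1}\bm{A}\preceq 2I$ because the eigenvalues of the Markov matrix $D^{-1}\bm{A}$ lie in $[-1,1]$. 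Since $D_{\mathrm{P}\mathrm{N}}$ is quadratic, the exact second-order Taylor expansion at $\bm{x}^{(l)}$ with increment $-\nabla D_{\mathrm{P}\mathrm{N}}$ gives $D_{\mathrm{P}\mathrm{N}}(\bm{h}^{(l)})\le D_{\mathrm{P}\mathrm{N}}(\bm{x}^{(l)})-\|\nabla D_{\mathrm{P}\mathrm{N}}\|^2+\|\nabla D_{\mathrm{P}\mathrm{N}}\|^2=D_{\mathrm{P}\mathrm{N}}(\bm{x}^{(l)})$. The transformation step (weight matrix and nonlinearity) is split off and not analysed; only the aggregation is shown to contract.

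The obstacle you flag is real, and your proposed resolution does not yet close it. After the Jensen step your upper bound is $\sum_{i\in\mathrm{P},\,j\in\mathrm{N}}\tilde{\bm{A}}_{ij}\sum_{k,l}\tilde{\bm{A}}_{ik}\tilde{\bm{A}}_{jl}\|\bm{x}_k-\bm{x}_l\|^2$, which reweights \emph{all} pairwise distances (including homophilic ones) by cubic products of $\tilde{\bm{A}}$-entries rather than by $\tilde{\bm{A}}_{ij}$; isolating the diagonal $k=i,\ l=j$ recovers only a fraction of $D_{\mathrm{P}\mathrm{N}}(\bm{x})$, and the spectral bound $\tilde{\bm{A}}^\top\tilde{\bm{A}}\preceq I$ does not by itself dominate the off-diagonal residual. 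Your block argument (``homophilic averaging cannot enlarge cross-class distance; heterophilic mixing pulls the classes together'') is correct for each block \emph{in isolation}, but once the blocks are composed their effects interact, and the sketch provides no mechanism to control that interaction. It is worth noting that the paper's own gradient lemma sums over \emph{all} neighbours of $i$, which is the gradient of the full Dirichlet energy rather than of $D_{\mathrm{P}\mathrm{N}}$ as stated; so the very subtlety you identified is silently absorbed there too, and the clean gradient-descent argument really establishes contraction of the full energy, from which the restricted claim is then read off.
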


Proof of Theorem \ref{thm:decrease} is in Appendix \ref{app:d}. Theorem \ref{thm:decrease} indicates that following one aggregation step within the heterophilic
structure, there is a reduction in the overall distance between P and N nodes. Thus, the heterophilic edges between P and N nodes hinder the distinguishability of these nodes, thereby making the task of identifying P and N nodes from U nodes more challenging during PU classification. Fortunately, the GPL method effectively minimizes heterophilic edge weights by utilizing LPL, which ensures that P and N nodes maintain a more discernible distinction than that they would be under the original structure. 


\section{Experiment}

\subsection{Experiment Setting} 
\begin{table*} [t!]
    \centering
    \caption{Absolute estimation error with the true class prior in the first row over different datasets. Results were reported by meaning absolute error over five experiments. The best model with the smallest estimation error is highlighted.}
    \vspace{-5pt}
    \label{tab:cpe_es}
    \resizebox{\linewidth}{!}{
    \begin{tabular}{lcccc| cccccc} 
    \toprule
     &  \texttt{\bf Cora}           &   \texttt{\bf Pubmed}           &   \texttt{\bf Citeseer}   &   \texttt{\bf Wiki-CS}          &   \texttt{\bf Cornell} & \texttt{\bf Chameleon}        &  \texttt{\bf Squirrel}   &   \texttt{\bf Actor}           &   \texttt{\bf Wisconsin}            &   \texttt{\bf Texas}  \\
     
     \textbf{Hete.\ ratio} $h$ & \textbf{0.19} & \textbf{0.20} & \textbf{0.26} & \textbf{0.35}  & \textbf{0.70} & \textbf{0.77} & \textbf{0.78} & \textbf{0.78} & \textbf{0.79} & \textbf{0.89} \\
     
		\textbf{\#True $\pi_{\mathrm{p}}$} &  0.1779 & 
0.2496 & 0.1179 & 0.1293 & 0.2887 & 0.1294 & 0.1113 & 0.1485 & 0.3073 & 0.3835  \\
    \midrule
    \midrule

        {KM} &  
    ${0.09 \scriptstyle\pm 0.02}$ & 
    ${0.08 \scriptstyle\pm 0.04}$ & 
    ${0.10\scriptstyle\pm 0.05}$ &  
    ${0.06 \scriptstyle\pm 0.03}$ & 
    ${0.21 \scriptstyle\pm 0.04}$ & 
    ${0.18 \scriptstyle\pm 0.05}$ & 
    ${0.39\scriptstyle\pm  0.10}$ & 
    ${0.16 \scriptstyle\pm  0.06}$ & 
    ${0.19 \scriptstyle\pm 0.08}$ & 
    ${0.31\scriptstyle\pm 0.05}$  \\

            {DEDPUL} &  
    ${0.05 \scriptstyle\pm 0.03}$ & 
    ${0.05 \scriptstyle\pm 0.05}$ & 
    ${0.07\scriptstyle\pm 0.04}$ &  
    ${0.05 \scriptstyle\pm 0.02}$ & 
    ${0.19 \scriptstyle\pm 0.05}$ & 
    ${0.16 \scriptstyle\pm 0.04}$ & 
    ${0.32\scriptstyle\pm  0.09}$ & 
    ${0.14 \scriptstyle\pm  0.07}$ & 
    ${0.17 \scriptstyle\pm 0.06}$ & 
    ${0.25\scriptstyle\pm 0.07}$  \\

    {MPE} &  
    ${0.04 \scriptstyle\pm 0.03}$ & 
    ${0.04 \scriptstyle\pm 0.02}$ & 
    ${0.08\scriptstyle\pm 0.04}$ &  
    ${0.02 \scriptstyle\pm 0.01}$ & 
    ${0.13 \scriptstyle\pm 0.06}$ & 
    ${0.10 \scriptstyle\pm 0.03}$ & 
    ${0.36\scriptstyle\pm  0.17}$ & 
    ${0.08 \scriptstyle\pm  0.02}$ & 
    ${0.14 \scriptstyle\pm 0.06}$ & 
    ${0.26\scriptstyle\pm 0.03}$  \\

    {ReMPE} &  
    ${0.03 \scriptstyle\pm 0.03}$ & 
    ${0.04 \scriptstyle\pm 0.01}$ & 
    ${0.06\scriptstyle\pm 0.03}$ &  
    ${0.02 \scriptstyle\pm 0.02}$ & 
    ${0.14 \scriptstyle\pm 0.07}$ & 
    ${0.12 \scriptstyle\pm 0.05}$ & 
    ${0.34\scriptstyle\pm  0.12}$ & 
    ${0.11 \scriptstyle\pm  0.04}$ & 
    ${0.12 \scriptstyle\pm 0.05}$ & 
    ${0.25\scriptstyle\pm 0.03}$  \\

    {BBE} & ${0.03\scriptstyle\pm0.01}$ & 
    ${0.05\scriptstyle\pm0.01}$ & 
    ${0.04\scriptstyle\pm0.02}$ &  
    ${0.03\scriptstyle\pm0.01}$ & 
    ${0.27\scriptstyle\pm0.01}$ & 
    ${0.12\scriptstyle\pm0.01}$ & 
    ${0.10\scriptstyle\pm0.08}$ & 
    ${0.11\scriptstyle\pm0.01}$ & 
    ${0.23\scriptstyle\pm0.05}$ & 
    ${0.27\scriptstyle\pm0.02}$  \\

    {TED} &  ${0.02\scriptstyle\pm0.01}$ & ${0.04\scriptstyle\pm0.00}$ & ${0.02\scriptstyle\pm0.01}$ &  ${0.02\scriptstyle\pm0.01}$ &  ${0.18\scriptstyle\pm0.04}$ & 
    ${0.25\scriptstyle\pm0.03}$ & ${0.13\scriptstyle\pm0.04}$ & 
    ${0.25\scriptstyle\pm0.10}$ &  ${0.16\scriptstyle\pm0.02}$ & ${0.26\scriptstyle\pm0.03}$ \\

    {GRAB} &  
    ${0.07\scriptstyle\pm0.13}$ & 
    ${0.08\scriptstyle\pm0.09}$ & 
    ${0.06\scriptstyle\pm0.08}$ &  
    ${0.09\scriptstyle\pm0.12}$ & 
    ${0.35\scriptstyle\pm 0.26}$ & 
    ${0.18 \scriptstyle\pm 0.07}$ & 
    ${0.36\scriptstyle\pm 0.19}$ & 
    ${0.30 \scriptstyle\pm 0.07}$ & 
    ${0.37\scriptstyle\pm 0.18}$ & 
    ${0.33\scriptstyle\pm 0.20}$  
      \\
           \midrule

         \rowcolor{LightGreen}
        GPL & $\bf{0.01{\scriptstyle\pm0.01}}$&$\bf{0.008{\scriptstyle\pm0.01}}$&$\bf{0.01{\scriptstyle\pm0.01}}$&$\bf{0.01{\scriptstyle\pm0.00}}$&$\bf{0.04{\scriptstyle\pm0.03}}$&$\bf{0.07±  {\scriptstyle\pm0.01}}$&$\bf{0.02  {\scriptstyle\pm0.01}}$&$\bf{0.02{\scriptstyle\pm0.01}}$&$\bf{0.03{\scriptstyle\pm0.02}}$&$\bf{0.08{\scriptstyle\pm0.03}}$ \\
            \rowcolor{LightGreen}
         & \textbf{\color{Green}{$\downarrow$ $0.01$}}
        &\textbf{\color{Green}{$\downarrow$ $0.03$}}
        &\textbf{\color{Green}{$\downarrow$ $0.01$}}
        &\textbf{\color{Green}{$\downarrow$ $0.01$}}
        &\textbf{\color{Green}{$\downarrow$ $0.09$}}
        & \textbf{\color{Green}{$\downarrow$ $0.03$}}
        & \textbf{\color{Green}{$\downarrow$ $0.08$}}
        &\textbf{\color{Green}{$\downarrow$ $0.06$}}
        &\textbf{\color{Green}{$\downarrow$  $0.09$}}
        & \textbf{\color{Green}{$\downarrow$ $0.17$}}\\
        
	   \bottomrule
    \end{tabular}
    }
    \vspace{-5pt}
\end{table*}
\paragraph{Datasets}
We evaluate the performance of our method on various real-world datasets, each characterized by an edge homophily ratio $h$ ranging from strong homophily to strong heterophily, as defined in ~\citet{zhu2020beyond}. We have summarized the dataset details in Table~\ref{tab:class_evl}. To transform these datasets
into binary classification tasks, we follow the previous approach ~\citep{yoo2021accurate,yang2023positive}, where we treat the label with the largest number of nodes as positive and the rest as negative. The resulting numbers of positive and negative nodes are reported in the Appendix Table~\ref{table::dataset}.

\vspace{-5pt}
\paragraph{Dataset Processing}
To ensure a fair comparison with related work~\cite{yoo2021accurate,wu2021learning,yang2023positive}, we randomly split each positive and negative dataset into positive and unlabeled sets. For each dataset, we use $r_{\mathrm{p}} = 50\%$ of all positive nodes as the observed positive nodes $\mathcal{P}$ and treat the rest as unobserved positive nodes $\mathcal{P}_{u}$. All negative nodes are treated as unobserved and denoted by $\mathcal{N}_{u}$, as we assume graph PU learning. Then, our objective is to predict the labels of each node in $\mathcal{U} = \mathcal{P}_{u}\cup \mathcal{N}_{u}$ as test data. This is achieved by training a classifier using the labeled nodes $\mathcal{P}$ and the unlabeled nodes $\mathcal{U}$; All nodes are accessible during training, but the labels of only $\mathcal{P}$ are observable as training data. Additionally, the true prior ${\pi}_{p}$, which is defined as ${\pi}_{p} = \mathcal{P}_{u}/(\mathcal{P}_{u}+\mathcal{N}_{u})$ in our experiments.

\vspace{-5pt}
\paragraph{Experimental Setup}
In the GPL, the backbone model is a graph convolutional network (GCN), with the number of layers set to $2$ and the size of hidden layers set to $16$. We train each model using Adam optimizer with a learning rate of $0.01$. Owing to the imbalance between positive and negative nodes in our dataset, we adopt F1 score as our measure. For each experiment, we run five trials with different random seeds and  compute the average and standard deviation.

\subsection{Main Results}

\begin{figure*} [t!]

\centering
\includegraphics[width=1\linewidth]{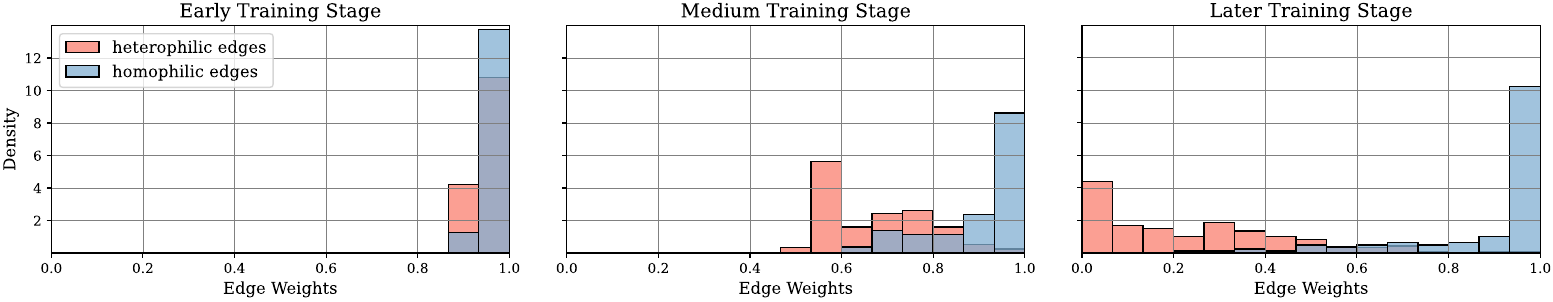}
\vspace{-20pt}

\caption{Distribution of edge weights for heterophilic and homophilic edges on Cornell dataset as training process in GPL. 
}
\label{figure::distribution_hete}
\vspace{-5pt}
\end{figure*}

\paragraph{Classification Evaluation}
In this experiment, we benchmarked our method against several established approaches. This includes (1) Native GCN and MLP, which represent standard baselines for a range of graph structures, from homophily to heterophily; (2) GCN+TED, MLP+TED, GCN+NNPU, and MLP+NNPU: These are two popular methods adapted for PU learning in i.i.d data, \ie TED~\citep{garg2021mixture} and NNPU~\cite{kiryo2017positive} implemented with GCN and MLP as the backbone models. (3) LSDAN~\citep{ma2017pu}, GRAB~\citep{yoo2021accurate}, and PU-GNN~\citep{yang2023positive}: These are three current methods tailored for graph PU learning. Further details about these baseline methods can be found in Appendix~\ref{baseline}. 

The results presented in Table~\ref{tab:class_evl} show that our GPL method consistently outperforms others, achieving the highest classification performance across a variety of datasets, from homophilic to heterophilic datasets. Remarkably, our method shows particularly substantial improvements in datasets characterized by a high degree of heterophilic structure. Popular methods that excel in PU learning with i.i.d. data, such as TED and NNPU, do not demonstrate the same level of effectiveness on graph data, especially in the presence of pronounced heterophilic structures. This discrepancy underscores the unique challenges posed by heterophilic structures and proves the necessity and novelty of our GPL approach. Moreover, existing graph-based PU learning methods also show a marked decline in performance under heterophilic conditions, further confirming this point.

\vspace{-10pt}
\paragraph{Class Prior Estimation}
In this section, we discuss the results of CPE. We compare our method with KM~\citep{ramaswamy2016mixture}, DEDPUL~\citep{ivanov2020dedpul}, MPE~\citep{scott2015rate}, REMPE~\citep{yao2020rethinking}, BBE, TED~\citep{garg2021mixture} and GRAB~\citep{yoo2021accurate}. The summarized results are shown in Table~\ref{tab:cpe_es}. Overall, using the same backbone model, our method consistently outperforms other baseline models. Notably, as the heterophily ratio increases, we observe a corresponding decrease in the estimation error. This trend clearly demonstrates that our method effectively reduces estimation errors by mitigating the impact of the heterophilic structure.

\begin{figure}[t!]
\centering
\includegraphics[width=1\linewidth]{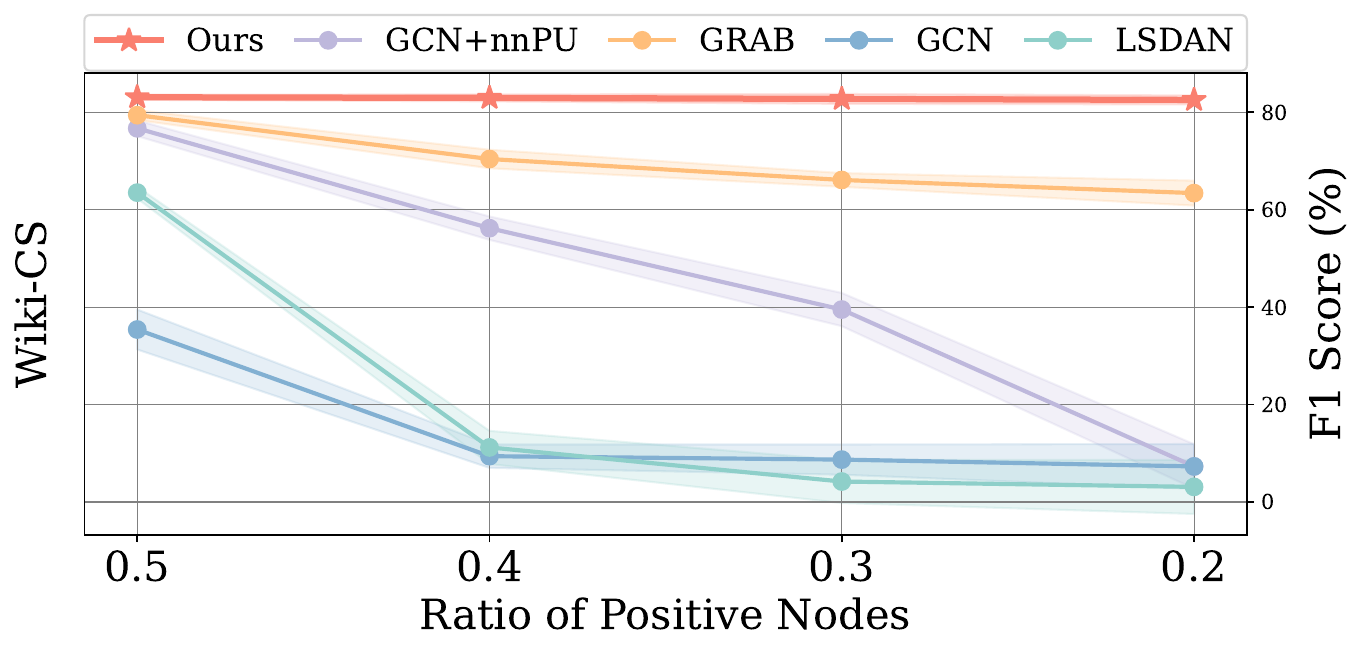}
\vspace{-18pt}
\caption{ The F1 scores of GPL and the baseline approaches for PU learning. We change the ratio $r_{\mathrm{p}}$ of observed positive nodes among all positive ones.}
\label{figure::diff_ratio}
\end{figure}

\subsection{Ablation Study}

\paragraph{Different Ratios of Labeled Nodes in PU learning.}
We summarize the performance of GPL and the baselines with different ratios of observed positive nodes to unlabeled ones in Figure~\ref{figure::diff_ratio}. We gradually decrease the ratio $r_{\mathrm{p}}$ from $0.5$ to $0.2$. The problem becomes more difficult with smaller $r_{\mathrm{p}}$. Despite this challenge, our method consistently maintains high F1 scores across all values of $r_{\mathrm{p}}$, unlike the baselines whose performance deteriorates. This indicates the robustness of GPL even in imbalanced PU learning scenarios, where the count of observed nodes is significantly lower than the total number of nodes.

\vspace{-5pt}
\paragraph{The distribution of Learned weights}
In Figure~\ref{figure::distribution_hete}, we showcase the distribution of weights for both heterophilic and homophilic edges. During training with GPL, heterophilic edges tend to receive lower weights, whereas the weights of homophilic edges remain larger. This empirical observation demonstrates the substantial effectiveness of our GPL in mitigating the influence of heterophilic structures, thereby contributing to graph PU learning.
\vspace{-5pt}
\paragraph{Contribution of Each Component}
We conducted a comprehensive analysis to evaluate the individual contributions of various components in our GPL method. This analysis in Table~\ref{tab:ablation} includes (1) GPL without the LPL, focusing solely on reducing the heterophilic structure; (2) GPL without bilevel optimization,  employing a two-stage approach of first optimizing the graph structure and then learning from PU nodes; (3) GPL without selected nodes, omitting the use of extracted nodes to enhance the convergence of the LPL. The results elucidate the distinct contributions of each component and underscore the importance of the LPL, especially in scenarios with a high heterophily ratio.

\begin{table} [t!]
    \centering
    \caption{Ablation study for GPL.}
    \vspace{-5pt}
    \label{tab:ablation}
    \resizebox{\linewidth}{!}{
    \begin{tabular}{c|cccc} 
    \toprule
    \texttt{\bf Variant} &
         \texttt{\bf Citeseer}   &    \texttt{\bf Wiki-CS} 
        &   \texttt{\bf Cornell}   &    \texttt{\bf Chameleon}    \\
\textbf{Hete.\ ratio} $h$   & \textbf{0.26} & \textbf{0.35} & \textbf{0.70} & \textbf{0.77} \\
    \midrule
    \midrule

         \rowcolor{LightGreen}
        GPL 
           &$\bf{74.1{\scriptstyle\pm0.9}}$
        &$\bf{82.3{\scriptstyle\pm0.8}}$
        &$\bf{37.9{\scriptstyle\pm1.3}}$
        &$\bf{36.2{\scriptstyle\pm1.4}}$
        
\\
     - w/o LPL &     ${70.0\downarrow 5.5\%}$ & 
    ${79.6\downarrow 3.3\%}$ & 
    ${13.9\downarrow 63.3\%}$ &  
    ${17.5\downarrow 51.6\%}$  \\

    - w/o Bilevel &          ${71.5\downarrow 3.5\%}$ & 
    ${80.1\downarrow 2.7\%}$ & 
    ${32.4\downarrow 14.5\%}$ & 
    ${28.9\downarrow 20.2\%}$    \\

    - w/o Selected &     ${71.1\downarrow 5.3\%}$ & 
    ${80.2\downarrow 3.7\%}$ & 
    ${34.6\downarrow 8.7\%}$ &  
    ${32.4\downarrow 10.5\%}$ \\
        
	   \bottomrule
    \end{tabular}
    }
\end{table}

\section{Related Work}

\paragraph{Positive-Unlabeled Learning}
Positive-Unlabeled (PU) learning is broadly categorized into two subtasks: (i) Class-Prior
Estimation (CPE) — determining the fraction of positive
examples in the unlabeled data; and (ii) PU classification—given such an estimate, learning the desired positive-versus-negative classifier. Research on CPE and PU classification date to ~\citep{de1999positive,denis1998pac,letouzey2000learning}. Without specific assumptions for CPE, class prior remains unidentifiable. To ensure identifiability, \citet{du2014semi} and \citet{elkan2008learning} posited that positive and negative examples must have disjoint support. Following this, the concept of the irreducibility assumption was introduced by \citet{blanchard2010semi}, forming the basis for nearly all subsequent CPE algorithms \citep{blanchard2010semi, ivanov2020dedpul, jain2016nonparametric}. Recently, \citet{yao2020rethinking} and \citet{zhu2023mixture} have begun exploring approaches to CPE that transcend the irreducibility assumption. However, these assumptions are based on i.i.d data, making them ill-suited for application to graph data. Our work is the first effort to adapt these assumptions for graph-based CPE. 

Given the estimated class prior, the PU classification methods can be divided into two categories based on how unlabeled data is handled~\citep{xinrui2023beyond}. The first category focuses on the sample-selection task to form a reliable negative set and further yield the semi-supervised learning framework, where the quantity of selected N data depends on the class prior~\citep{liu2002partially,li2003learning}; the second regards unlabeled data as weighted positive and negative data simultaneously, as determined by the class-prior~\citep{niu2016theoretical,kiryo2017positive}. Presently, methods specific to graph-based PU classification all fall into the latter category~\citep{ma2017pu,yoo2021accurate,yang2023positive}. However, these methods typically rely on the homophily characteristics of graphs to modulate the training loss for positive and unlabeled nodes, restricting their effectiveness in heterophilic structures, which are prevalent in graph data~\citep{zhu2022does,li2022finding}.

\paragraph{Graph Learning with Heterophily}
Heterophilic structures, prevalent in many graph data scenarios, have garnered considerable attention lately, leading to the emergence of various models designed to tackle this challenge.  For instance, Geom-GCN~\citep{pei2020geom} employs a bilevel aggregation process in the embedding space to manage heterophily. CPGNN~\citep{zhu2021graph} addresses heterophilic signals by modelling label correlations using a compatibility matrix. Other methods~\citep{luan2022revisiting,chien2020adaptive,bo2021beyond} using the high-frequency graph signals in supervised embedding space to address heterophily. However, a common limitation of these methods is their dependency on complete label information to address heterophily. This dependency is not available in our PU learning setting, where negative labels are absent.

\section{Conclusion}

In this paper, we address the challenge of employing Positive and Unlabeled (PU) learning on graph-structured data, primarily due to the presence of heterophilic edges. We introduce the \textit{\underline{G}raph \underline{P}U Learning with \underline{L}abel \underline{P}ropagation \underline{L}oss} (GPL) method, specifically engineered to mitigate the adverse effects of heterophilic structures. This mitigation hinges on the newly proposed \textit{Label Propagation Loss} (LPL), which effectively lowers the weights of heterophilic edges. Assisted by LPL, GPL refines the graph structure and develops a well-performing binary classifier for positive-unlabeled nodes. Our approach encapsulates this process in a bilevel optimization framework, where heterophilic reduction occurs in the inner loop, and classifier learning takes place in the outer loop. We have validated the efficacy of this framework through rigorous theoretical analysis and comprehensive experiments. Looking forward, we aim to extend our exploration to other forms of imperfect graph data, such as imbalanced or out-of-distribution graphs, further demonstrating the versatility of our approach.

\section*{Acknowledgements}

TLL is partially supported by the following Australian Research Council projects: FT220100318, DP220102121, LP220100527, LP220200949, and IC190100031. JCY is supported by the National Key R\&D Program of China (No. 2022ZD0160703), 111 plan (No. BP0719010) and National Natural Science Foundation of China (No. 62306178). BH is supported by the NSFC General Program No. 62376235 and Guangdong Basic and Applied Basic Research Foundation Nos. 2022A1515011652 and 2024A1515012399. The authors would give special thanks to Suqin Yuan and Muyang Li for helpful discussions and comments. The authors thank the reviewers and the meta-reviewer for their helpful and constructive comments on this work.

\section*{Impact Statement}

In the era of big data, the prevalence of incomplete (positive-unlabeled) labels presents significant reliability challenges for traditional supervised learning algorithms. This issue is particularly acute in graph data, where it impedes the effective training of graph models. Learning models accurately from positive and unlabeled nodes is a crucial issue, drawing increasing attention in both research and industry circles due to its significant impact on practical applications involving graph data.

In this study, we propose the GPL method as a novel approach to learning from positive and unlabeled nodes. This method is specifically designed to counter the negative effects of heterophilic edges. The effectiveness of GPL is substantiated by the extensive evidence presented in our paper. The findings of this research contribute to a deeper understanding of how to handle incomplete labels in graph data. They mark a significant step forward in enhancing the robustness and accuracy of graph models, paving the way for more reliable and precise graph-based learning.

\bibliography{main}
\bibliographystyle{icml2024}

\newpage

\appendix
\onecolumn
\section{Proof of the Theorem~\ref{irr_condtion}}
\label{app:A.}

Up to now, almost all CPE algorithms assume $\mathbbm{P}_\mathrm{n}$ to be irreducible with respect to $\mathbbm{P}_\mathrm{p}$~\citep{blanchard2010semi,ivanov2020dedpul,jain2016nonparametric}, or stricter conditions like the anchor set assumption~\citep{ramaswamy2016mixture,yao2020rethinking}. In this section, we briefly review the irreducibility assumptions used for existing CPE estimators on i.i.d data. Then we understand the irreducibility assumption on the graph data when estimating the class prior on a graph.

The so-called irreducibility assumption was proposed by~\citep{blanchard2010semi}:
\begin{definition} [Irreducibility] $\mathbbm{P}_\mathrm{n}$ and $\mathbbm{P}_\mathrm{p}$ are said to satisfy the irreducibility assumption if $\mathbbm{P}_\mathrm{n}$ is not a mixture containing $\mathbbm{P}_\mathrm{p}$. That is, there does not exist a decomposition $\mathbbm{P}_\mathrm{n} = (1-\beta)Q + \beta \mathbbm{P}_\mathrm{p}$, where $Q$ is some probability distribution and $0 < \beta \leq 1$.

Equivalently, the irreducibility assumption assumes the support of $\mathbbm{P}_\mathrm{p}$ is hardly contained in the support of $\mathbbm{P}_\mathrm{n}$ and implies the following fact~\citep{scott2013classification,zhu2023mixture}:
\begin{lemma} \label{irr_ass_variation}
We say that $\mathbbm{P}_\mathrm{n}$ is irreducible with respect to $\mathbbm{P}_\mathrm{p}$ if the infimum of the likelihood ratio
\begin{equation}
    \essinf_{i \in \mathcal{V}} \frac{\mathbbm{P}(\bm{x}_i,\mathcal{G}_{\bm{x}_i}^{\bm{A}}|y=-1)}{\mathbbm{P}(\bm{x}_i,\mathcal{G}_{\bm{x}_i}^{\bm{A}}|y=+1)} = 0.
\end{equation}
\end{lemma}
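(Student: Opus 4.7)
The plan is to establish the biconditional by exhibiting the offending mixture decomposition on one side and reading it off a pointwise lower bound for the likelihood ratio on the other. Treating each $(\bm{x}_i,\mathcal{G}_{\bm{x}_i}^{\bm{A}})$ as a point in a generic measurable space, the graph-specific structure plays no active role beyond fixing the ambient sample space; the whole argument is a standard manipulation of the Radon--Nikodym derivative $L := d\mathbbm{P}_\mathrm{n}/d\mathbbm{P}_\mathrm{p}$, understood $\mathbbm{P}_\mathrm{p}$-almost everywhere, and the $\essinf_{i \in \mathcal{V}}$ is read as the essential infimum with respect to $\mathbbm{P}_\mathrm{p}$.

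First I would dispatch the easy direction via contraposition. Assume a decomposition $\mathbbm{P}_\mathrm{n} = (1-\beta)Q + \beta\,\mathbbm{P}_\mathrm{p}$ with $\beta \in (0,1]$ and $Q$ a probability measure. Then for every measurable event $E$ one has $\mathbbm{P}_\mathrm{n}(E) \ge \beta\,\mathbbm{P}_\mathrm{p}(E)$, so $\mathbbm{P}_\mathrm{n}$ dominates $\beta\,\mathbbm{P}_\mathrm{p}$ as a measure. Taking Radon--Nikodym derivatives with respect to $\mathbbm{P}_\mathrm{p}$ gives $L \ge \beta$ holding $\mathbbm{P}_\mathrm{p}$-a.e., hence $\essinf L \ge \beta > 0$. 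Contrapositively, vanishing of the essential infimum rules out any such $\beta$, so $\mathbbm{P}_\mathrm{n}$ must be irreducible with respect to $\mathbbm{P}_\mathrm{p}$.

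The converse is constructive. Suppose $c := \essinf L > 0$. A one-line integration check shows $c \le 1$, since $1 = \int d\mathbbm{P}_\mathrm{n} = \int L\, d\mathbbm{P}_\mathrm{p} \ge c$. The corner case $c = 1$ forces $L \equiv 1$ $\mathbbm{P}_\mathrm{p}$-a.e.\ and hence $\mathbbm{P}_\mathrm{n} = \mathbbm{P}_\mathrm{p}$, which trivially witnesses non-irreducibility with $\beta = 1$. For $c \in (0,1)$, I would set $\beta := c$ and define the residual measure $Q := (\mathbbm{P}_\mathrm{n} - c\,\mathbbm{P}_\mathrm{p})/(1-c)$. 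The pointwise bound $L \ge c$ $\mathbbm{P}_\mathrm{p}$-a.e.\ makes $Q$ a non-negative measure, and normalisation of $\mathbbm{P}_\mathrm{n}$ and $\mathbbm{P}_\mathrm{p}$ forces its total mass to equal $1$, so $Q$ is a probability distribution. The identity $\mathbbm{P}_\mathrm{n} = (1-\beta)Q + \beta\,\mathbbm{P}_\mathrm{p}$ then exhibits the forbidden decomposition and contradicts irreducibility.

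The main technical obstacle is the possibility that $\mathbbm{P}_\mathrm{n}$ is not absolutely continuous with respect to $\mathbbm{P}_\mathrm{p}$, since then the ratio in the statement does not literally exist as a pointwise quotient of densities. I would handle this through the Lebesgue decomposition $\mathbbm{P}_\mathrm{n} = \mathbbm{P}_\mathrm{n}^{\mathrm{ac}} + \mathbbm{P}_\mathrm{n}^{\mathrm{s}}$ relative to $\mathbbm{P}_\mathrm{p}$, interpreting $L$ as $d\mathbbm{P}_\mathrm{n}^{\mathrm{ac}}/d\mathbbm{P}_\mathrm{p}$; any singular component $\mathbbm{P}_\mathrm{n}^{\mathrm{s}}$ only adds extra non-negative mass to $Q$, so both directions still go through. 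This is the standard reduction behind the classical mixture-decomposition results invoked by the lemma, and it avoids any further structural assumption on the joint measurable space of features and ego-graphs.
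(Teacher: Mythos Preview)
Your argument is correct and self-contained: both directions of the biconditional are handled cleanly, and the Lebesgue-decomposition patch for the non-absolutely-continuous case is the right technical fix. One minor point worth tightening: in the $c=1$ corner case you implicitly use that $\int L\,d\mathbbm{P}_\mathrm{p}=\mathbbm{P}_\mathrm{n}^{\mathrm{ac}}(\Omega)\le 1$ together with $L\ge 1$ a.e.\ to force both $L\equiv 1$ and $\mathbbm{P}_\mathrm{n}^{\mathrm{s}}=0$; it would be cleaner to state that inference explicitly rather than jump directly to $\mathbbm{P}_\mathrm{n}=\mathbbm{P}_\mathrm{p}$.

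As for comparison with the paper: there is nothing to compare. The paper does not prove this lemma at all; it simply records it as a known fact with citations to \cite{scott2013classification} and \cite{zhu2023mixture}, and then combines it with Proposition~\ref{latent label with CPE} to obtain Theorem~\ref{irr_condtion}. So you have supplied a full proof where the paper relies on external references. Your treatment is essentially the classical argument from the mixture-proportion-estimation literature, specialised only in notation to the ego-graph sample space; your remark that the graph structure plays no active role is exactly the point.
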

\end{definition}

Now, we consider a way of understanding irreducibility in terms of a latent graph label model on a graph. The conditional probability of $y$ given $(\bm{x}, \mathcal{G}_{\bm{x}}^{\bm{A}})$ to be defined via
\begin{equation}
    \begin{split}
   \mathbbm{P}(y = +1  | \bm{x},  \mathcal{G}_{\bm{x}}^{\bm{A}}) 
          = \begin{cases}
        \frac{\pi_\mathrm{p}\mathbbm{P}(\bm{x},\mathcal{G}_{\bm{x}}^{\bm{A}}|y=+1)}{\mathbbm{P}(\bm{x},\mathcal{G}_{\bm{x}}^{\bm{A}})}, & \mathbbm{P}(\bm{x},\mathcal{G}_{\bm{x}}^{\bm{A}}) > 0, \\
        0, & \text{otherwise}.
        \end{cases}
    \end{split}
    \label{eq:cond}
\end{equation}

This latent graph label model is inspired by the commonly used label model in the PU learning literature~\cite{bekker2020learning}. $y$ may be viewed as a label indicating which component an observation from $\mathbbm{P}(\bm{x},\mathcal{G}_{\bm{x}}^{\bm{A}})$ was drawn from. Going forward, we use this latent graph label model in addition to the CPE.

\begin{proposition} \label{latent label with CPE} Under the latent label model, the essential supremum 
\begin{equation}
    \begin{split}
    \esssup_{i \in \mathcal{V}} \mathbbm{P}(y = +1 | \bm{x}_i,  \mathcal{G}_{\bm{x}_i}^{\bm{A}})  =     \frac{\pi_\mathrm{p}} {\essinf_{i \in \mathcal{V}} \frac{\mathbbm{P}(\bm{x}_i,\mathcal{G}_{\bm{x}_i}^{\bm{A}})}{\mathbbm{P}(\bm{x}_i,\mathcal{G}_{\bm{x}_i}^{\bm{A}}|y=+1)}}  \\
    = \frac{\pi_\mathrm{p}} {\pi_\mathrm{p} + (1-\pi_\mathrm{p})\essinf_{i \in \mathcal{V}} \frac{\mathbbm{P}(\bm{x}_i,\mathcal{G}_{\bm{x}_i}^{\bm{A}}|y=-1)}{\mathbbm{P}(\bm{x}_i,\mathcal{G}_{\bm{x}_i}^{\bm{A}}|y=+1)}}.
    \end{split}
\end{equation}
\end{proposition}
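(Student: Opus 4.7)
The plan is to reduce the proposition to two elementary algebraic rearrangements of the latent-label identity in Equation~\ref{eq:cond} and then pass to essential supremum and infimum using the monotonicity of the maps $x\mapsto c/x$ and $x\mapsto a+bx$. No new probabilistic input is needed beyond Equation~\ref{ptr} and Equation~\ref{eq:cond}; the ego-graph $\mathcal{G}_{\bm{x}}^{\bm{A}}$ enters only as part of the argument of the densities and plays no structural role in the computation.

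First I would invert Equation~\ref{eq:cond}. For every $i\in\mathcal{V}$ with $\mathbbm{P}(\bm{x}_i,\mathcal{G}_{\bm{x}_i}^{\bm{A}})>0$, the identity can be rewritten as
\begin{equation*}
\mathbbm{P}(y=+1\mid \bm{x}_i,\mathcal{G}_{\bm{x}_i}^{\bm{A}}) \;=\; \frac{\pi_\mathrm{p}}{\mathbbm{P}(\bm{x}_i,\mathcal{G}_{\bm{x}_i}^{\bm{A}})/\mathbbm{P}(\bm{x}_i,\mathcal{G}_{\bm{x}_i}^{\bm{A}}\mid y=+1)}.
\end{equation*}
Because $\pi_\mathrm{p}$ is a positive constant and $t\mapsto \pi_\mathrm{p}/t$ is strictly decreasing on $(0,\infty)$, taking $\esssup_{i\in\mathcal{V}}$ on the left corresponds to taking $\essinf_{i\in\mathcal{V}}$ of the denominator on the right. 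This yields the first equality of the proposition.

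For the second equality, I would substitute the mixture decomposition of Equation~\ref{ptr} into the numerator of the denominator ratio to obtain
\begin{equation*}
\frac{\mathbbm{P}(\bm{x}_i,\mathcal{G}_{\bm{x}_i}^{\bm{A}})}{\mathbbm{P}(\bm{x}_i,\mathcal{G}_{\bm{x}_i}^{\bm{A}}\mid y=+1)} \;=\; \pi_\mathrm{p} + (1-\pi_\mathrm{p})\,\frac{\mathbbm{P}(\bm{x}_i,\mathcal{G}_{\bm{x}_i}^{\bm{A}}\mid y=-1)}{\mathbbm{P}(\bm{x}_i,\mathcal{G}_{\bm{x}_i}^{\bm{A}}\mid y=+1)}.
\end{equation*}
Since $t\mapsto \pi_\mathrm{p} + (1-\pi_\mathrm{p})t$ is a monotone affine transform with non-negative slope (as $\pi_\mathrm{p}\in(0,1)$), the essential infimum commutes with it, and one obtains the second equality immediately.

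The only real care needed, and the likely obstacle, is bookkeeping with the essential supremum/infimum on the $\mathbbm{P}$-null sets. The set $\{i:\mathbbm{P}(\bm{x}_i,\mathcal{G}_{\bm{x}_i}^{\bm{A}}\mid y=+1)=0\}$ has $\mathbbm{P}_\mathrm{p}$-measure zero and can be removed from the essinf without change; similarly the set where the marginal vanishes, which Equation~\ref{eq:cond} handles by convention. After discarding these, both expressions are well-defined and the two derived equalities chain together to give the claim. I would also briefly note that in the degenerate regime where the essinf of the likelihood ratio is zero (\emph{i.e.}, the irreducibility condition holds), the right-hand side collapses to $1$, recovering Theorem~\ref{irr_condtion}.
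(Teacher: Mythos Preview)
Your argument is correct and complete. The paper does not actually supply a proof of this proposition in Appendix~\ref{app:A.}; it merely states the identity and immediately combines it with Lemma~\ref{irr_ass_variation} to obtain Theorem~\ref{irr_condtion}, so your direct derivation---rewriting Equation~\ref{eq:cond} as $\pi_\mathrm{p}$ over a ratio, swapping $\esssup$ for $\essinf$ via the strict monotonicity of $t\mapsto\pi_\mathrm{p}/t$, and then expanding the denominator through Equation~\ref{ptr}---is exactly the computation the paper leaves implicit.
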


Combining Proposition~\ref{latent label with CPE} and Lemma~\ref{irr_ass_variation}, we conclude the irreducible condition for satisfying the irreducibility assumption on a graph as in the Theorem~\ref{irr_condtion}.

\section{Proof of the Theorem~\ref{heter_pro_LPA}}
\label{app:B.}

Inspired from the~\citep{koh2017understanding,wang2020unifying,xu2018representation}, we can propose

\begin{lemma}
\label{lemma:2}
    Let $\mathcal U_j^{a \rightarrow b}$ be a path $[\bm{x}^{(j)}, \bm{x}^{(j-1)},\cdots, \bm{x}^{(0)}]$ of length $j$ from positive node $(\bm{x}_a,\mathcal{G}_{\bm{x}_a}^{\bm{A}})$ to negative node $(\bm{x}_b,\mathcal{G}_{\bm{x}_b}^{\bm{A}})$, where $\bm{x}^{(j)} = \bm{x}_a$, $\bm{x}^{(0)} = \bm{x}_b$, $\bm{x}^{(i-1)} \in \mathcal N(\bm{x}^{(i)})$ for $i = j,\cdots, 1$, and all nodes along the path are unlabeled except $\bm{x}^{(0)}$.
    Then we have
    \begin{equation}
        {HI((\bm{x}_a,\mathcal{G}_{\bm{x}_a}^{\bm{A}}),(\bm{x}_b,\mathcal{G}_{\bm{x}_b}^{\bm{A}});k)}  = \sum_{j=1}^k \sum_{\mathcal U_j^{a \rightarrow b}} \prod_{i=j}^1 \tilde a_{\bm{x}^{(i-1)}, \bm{x}^{(i)}},
    \end{equation}
    where $\tilde a_{\bm{x}^{(i-1)}, \bm{x}^{(i)}}$ is the normalized weight of edge $(\bm{x}^{(i)}, \bm{x}^{(i-1)})$.
\end{lemma}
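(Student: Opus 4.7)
The plan is to exploit the linearity of label propagation to reduce $HI$ to a single matrix entry, and then combinatorially expand that entry as a sum over walks. First, I would introduce the propagation operator $\bm{T}$ associated with the LPA update, so that the one-step recurrence $\bm{E}^{(k)} = \alpha \bm{E}^{(k-1)} + (1-\alpha)\bm{D}^{-1}\bm{A}\bm{E}^{(k-1)}$ becomes $\bm{E}^{(k)} = \bm{T}\bm{E}^{(k-1)}$. Because $\bm{T}$ is linear and fixed across iterations, induction on $k$ immediately yields $\bm{E}^{(k)} = \bm{T}^{k}\bm{E}^{(0)}$. Projecting onto the negative-class coordinate gives
\begin{equation*}
\mathbbm{P}(y_a = -1 \mid \bm{x}_a,\mathcal{G}_{\bm{x}_a}^{\bm{A}})^{(k)} = \sum_{c \in \mathcal{V}} (\bm{T}^{k})_{ac}\,\mathbbm{P}(y_c = -1 \mid \bm{x}_c,\mathcal{G}_{\bm{x}_c}^{\bm{A}})^{(0)}.
\end{equation*}

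Second, I would take the partial derivative with respect to $\mathbbm{P}(y_b = -1 \mid \bm{x}_b,\mathcal{G}_{\bm{x}_b}^{\bm{A}})^{(0)}$. For $b \neq a$ the subtracted term $\mathbbm{P}(y_a = -1)^{(0)}$ does not depend on $\mathbbm{P}(y_b = -1)^{(0)}$, so only the $c = b$ summand above contributes and the derivative equals $(\bm{T}^{k})_{ab}$, provided the absolute value can be dropped. This is justified by noting that $\bm{T}$ and $\bm{E}^{(0)}$ both have nonnegative entries, so the map $\mathbbm{P}(y_b = -1)^{(0)} \mapsto \mathbbm{P}(y_a = -1)^{(k)}$ is nondecreasing, making $\mathbbm{P}(y_a = -1)^{(k)} - \mathbbm{P}(y_a = -1)^{(0)}$ agree with its absolute value in the relevant regime.

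Third, I would expand $(\bm{T}^{k})_{ab}$ combinatorially. By definition
\begin{equation*}
(\bm{T}^{k})_{ab} = \sum_{c_1,\ldots,c_{k-1}} \bm{T}_{a,c_{k-1}}\bm{T}_{c_{k-1},c_{k-2}} \cdots \bm{T}_{c_1,b},
\end{equation*}
and each $\bm{T}_{u,v}$ is supported on $\{v = u\} \cup \mathcal{N}(u)$. Every nonzero summand therefore corresponds to a length-$k$ walk $a = \bm{x}^{(k)} \to \bm{x}^{(k-1)} \to \cdots \to \bm{x}^{(0)} = b$ in which each step is either a self-retention move (contributing an $\alpha$ factor) or a genuine edge move (contributing a $(1-\alpha)\bm{D}^{-1}\bm{A}$ weight). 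Grouping the walks by the number $j \in \{1,\ldots,k\}$ of genuine edge moves and absorbing the accumulated $\alpha$, $(1-\alpha)$ and binomial factors into the normalized edge weight $\tilde{a}_{\bm{x}^{(i-1)},\bm{x}^{(i)}}$, I would arrive at the stated identity $\sum_{j=1}^{k} \sum_{\mathcal{U}_j^{a \to b}} \prod_{i=j}^{1} \tilde{a}_{\bm{x}^{(i-1)},\bm{x}^{(i)}}$. The restriction that intermediate nodes are unlabeled follows from the standard LPA convention that labeled nodes are clamped to their initial values: walks passing through other labeled nodes get reset there and therefore do not transmit $b$'s influence beyond them, so they do not contribute to the derivative.

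The main obstacle will be the combinatorial bookkeeping in step three. A single geometric path of length $j$ appears inside $(\bm{T}^{k})_{ab}$ many times, once for every placement of the $k - j$ self-loop steps among the $k$ total steps, each weighted by the appropriate power of $\alpha$ and $1-\alpha$. Showing that these contributions collapse cleanly into a single coefficient $\prod_{i} \tilde{a}_{\bm{x}^{(i-1)},\bm{x}^{(i)}}$ requires defining $\tilde{a}$ carefully so that the $\alpha$-binomial sums telescope into the stated form. A secondary (easier) technical point is formalizing the clamping convention that produces the "unlabeled intermediate node" restriction, which I would simply take as part of the LPA definition underlying the analysis.
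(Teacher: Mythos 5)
The paper does not actually prove this lemma: it is introduced with ``we can propose,'' and the proof environment that follows in Appendix~B only \emph{uses} the lemma (summing it over all negative $\bm{x}_b$) to obtain Theorem~4.2, with the key identity asserted as ``clear.'' Your linear-operator argument is therefore a genuine derivation where the paper offers none, and its skeleton is right: with $\bm{T}=\alpha \bm{I}+(1-\alpha)\bm{D}^{-1}\bm{A}$ one has $\bm{E}^{(k)}=\bm{T}^{k}\bm{E}^{(0)}$, the derivative of $\mathbbm{P}(y_a=-1\mid\cdot)^{(k)}$ with respect to $\mathbbm{P}(y_b=-1\mid\cdot)^{(0)}$ is the single entry $(\bm{T}^{k})_{ab}$, the absolute value drops because $\bm{T}$ is row-stochastic with nonnegative entries and $\mathbbm{P}(y_a=-1\mid\cdot)^{(0)}=0$ for a positive node, and expanding the matrix power yields a sum over walks. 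The clamping convention you invoke to justify the ``unlabeled intermediate nodes'' restriction is the standard reading of LPA and is indeed needed for the path restriction to make sense.

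The one step that does not go through as written is the final ``absorption.'' Expanding $\bm{T}^{k}=\sum_{j=0}^{k}\binom{k}{j}\alpha^{k-j}(1-\alpha)^{j}(\bm{D}^{-1}\bm{A})^{j}$ shows that each geometric path of length $j$ contributes $\binom{k}{j}\alpha^{k-j}(1-\alpha)^{j}\prod_{i}(\bm{D}^{-1}\bm{A})_{\bm{x}^{(i)},\bm{x}^{(i-1)}}$, and the prefactor depends jointly on $j$ and $k$, not on any individual edge, so it cannot be folded into a per-edge weight $\tilde a_{\bm{x}^{(i-1)},\bm{x}^{(i)}}$: no edge-level definition of $\tilde a$ makes the products telescope for all $j$ simultaneously. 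This is a defect of the lemma's statement rather than of your strategy, but your proof must resolve it explicitly: either restate the identity with the coefficient $\binom{k}{j}\alpha^{k-j}(1-\alpha)^{j}$ in front of the inner sum, or declare that $\tilde a$ is understood to carry these $k$-dependent constants (which is harmless downstream, since Theorem~4.2 only needs the total influence to be a nonnegative combination equal to the probability increment). Leaving it at ``absorbing the binomial factors into the normalized edge weight'' is precisely where the argument is currently incomplete.
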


\begin{proof}
When running LPA, the value of the negative class-posterior probability in $y_a^{(\cdot)}$ (denoted by ${\mathbbm{P}(y_a = -1 | \bm{x}_a,\mathcal{G}_{\bm{x}_a}^{\bm{A}})}^{(\cdot)}$) mainly comes from the nodes with initial negative label.
It is clear that
\begin{equation}
    \left| {\mathbbm{P}(y_a = -1 | \bm{x}_a,\mathcal{G}_{\bm{x}_a}^{\bm{A}})}^{(k)} -{\mathbbm{P}(y_a = -1 | \bm{x}_a,\mathcal{G}_{\bm{x}_a}^{\bm{A}})}^{(0)} \right|:= \sum_{\bm x_b: y_b = -1} \sum_{j=1}^k \sum_{\mathcal U_j^{a \rightarrow b}} \prod_{i=j}^1 \tilde a_{\bm{x}^{(i-1)}, \bm{x}^{(i)}},
\end{equation}
which equals $\sum_{\bm{x}_b: y_b = -1}   {HI((\bm{x}_a,\mathcal{G}_{\bm{x}_a}^{\bm{A}}),(\bm{x}_b,\mathcal{G}_{\bm{x}_b}^{\bm{A}});k)}$ according to Lemma \ref{lemma:2}.
Therefore, we have
\begin{equation}
    \left| {\mathbbm{P}(y_a = -1 | \bm{x}_a,\mathcal{G}_{\bm{x}_a}^{\bm{A}})}^{(k)} -{\mathbbm{P}(y_a = -1 | \bm{x}_a,\mathcal{G}_{\bm{x}_a}^{\bm{A}})}^{(0)} \right| := \sum_{\bm{x}_b: y_b = -1} {HI((\bm{x}_a,\mathcal{G}_{\bm{x}_a}^{\bm{A}}),(\bm{x}_b,\mathcal{G}_{\bm{x}_b}^{\bm{A}});k)},
\end{equation}

which is the same as 
\begin{equation}
    \left| {\mathbbm{P}(y_a = -1 | \bm{x}_a,\mathcal{G}_{\bm{x}_a}^{\bm{A}})}^{(k)} -{\mathbbm{P}(y_a = -1 | \bm{x}_a,\mathcal{G}_{\bm{x}_a}^{\bm{A}})}^{(0)} \right| := \sum_{\bm{x}_i \in \mathcal{V}, \bm{x}_i \neq \bm{x}_a} {HI((\bm{x}_a,\mathcal{G}_{\bm{x}_a}^{\bm{A}}),(\bm{x}_i,\mathcal{G}_{\bm{x}_i}^{\bm{A}});k)} 
\end{equation}

\end{proof}

\section{Convergence of GPL}
\label{app:c.}
The convergence of the bilevel optimization~\citep{xia2023coreset} using approximated solution of inner loop was established in~\citep{pedregosa2016hyperparameter,yong2022holistic}. We restate it here for completeness.
\begin{theorem} [Convergence, Theorem 3.3 of~\citep{pedregosa2016hyperparameter}] Suppose $\mathcal{L}_{\textit{GNN}}(\hat{\pi}_\mathrm{p}, w,\hat{\bm{A}})$ is smooth w.r.t. $w$, $\mathcal{L}_{\textit{LPA}}(\bm{A})^{(k)}$ is $\beta$-smooth and $\alpha$-strongly convex w.r.t. $\bm{A}$. We solve the inner loop by unrolling
$J$ steps, choose the learning rate in the outer loop as $\eta_{\tau} = 1/\sqrt{\tau}$, then we arrive at an approximately stationary point as follows after $R$ steps:
\begin{equation}
    \mathbbm{E} \left[ \sum_{\tau = 1}^{R} \frac{\eta_{\tau} \left\|\nabla_{w} \mathcal{L}_{\textit{GNN}}(\hat{\pi}_\mathrm{p}, w,\hat{\bm{A}}) \right\|_{2}^{2}}{\sum_{\tau = 1}^{R}\eta_{\tau}}\right] \leq \tilde{O} \left( \epsilon + \frac{\epsilon^{2}+1}{\sqrt{R}}\right),
\end{equation}
where $\tilde{O}$ absorbs constants and logarithmic terms and $\epsilon = (1 - \alpha/\beta)^{J}$.
\end{theorem}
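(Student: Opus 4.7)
My plan is to cast Algorithm~\ref{alg:GPL} as an instance of approximate bilevel stochastic optimization and appeal to the convergence framework of \citep{pedregosa2016hyperparameter}. Let $F(w) := \mathcal{L}_{\textit{GNN}}(\hat{\pi}_\mathrm{p}, w, \hat{\bm{A}}^*(w))$ with $\hat{\bm{A}}^*(w) = \arg\min_{\bm{A}} \mathcal{L}_{\textit{LPL}}(\bm{A})^{(K)}$ be the true outer objective; what we actually have in hand at iteration $\tau$ is the biased surrogate gradient $\tilde{g}_\tau := \nabla_{w} \mathcal{L}_{\textit{GNN}}(\hat{\pi}_\mathrm{p}, w_\tau, \hat{\bm{A}}_J(w_\tau))$, where $\hat{\bm{A}}_J$ is the $J$-step-unrolled inner iterate. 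The argument splits into two halves: quantify the bias $\|\tilde{g}_\tau - \nabla F(w_\tau)\|$ introduced by the inexact inner solve, and then feed this bias into a standard nonconvex biased-SGD analysis in the outer loop with step size $\eta_\tau = 1/\sqrt{\tau}$.

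\paragraph{Inner-loop contraction and hypergradient bias.} Because $\mathcal{L}_{\textit{LPL}}(\cdot)^{(K)}$ is $\alpha$-strongly convex and $\beta$-smooth, gradient descent with step size $1/\beta$ contracts the distance to the unique minimizer at rate $(1-\alpha/\beta)$ per step, so after $J$ steps $\|\hat{\bm{A}}_J(w)-\hat{\bm{A}}^*(w)\| = O(\epsilon)$ with $\epsilon = (1-\alpha/\beta)^{J}$. Chaining this iterate bound with the smoothness of $\nabla_w\mathcal{L}_{\textit{GNN}}$ in its $\hat{\bm{A}}$-argument, and using the Jacobian-tracking property that $\partial \hat{\bm{A}}_J/\partial w$ converges to the implicit-function Jacobian $\partial \hat{\bm{A}}^*/\partial w$ at the same geometric rate (a consequence of the well-conditioned Hessian $\nabla_{\bm{A}}^2\mathcal{L}_{\textit{LPL}}$), delivers the key hypergradient-bias bound $\|\tilde{g}_\tau - \nabla F(w_\tau)\| = O(\epsilon)$.

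\paragraph{Outer-loop descent telescoping.} For $L$-smooth $F$, the descent lemma gives $F(w_{\tau+1}) \le F(w_\tau) - \eta_\tau\langle \nabla F(w_\tau),\tilde{g}_\tau\rangle + \tfrac{L}{2}\eta_\tau^2\|\tilde{g}_\tau\|^2$. Splitting $\langle \nabla F(w_\tau),\tilde{g}_\tau\rangle \ge \tfrac{1}{2}\|\nabla F(w_\tau)\|^2 - \tfrac{1}{2}\|\tilde{g}_\tau-\nabla F(w_\tau)\|^2$ and $\|\tilde{g}_\tau\|^2 \le 2\|\nabla F(w_\tau)\|^2 + 2\|\tilde{g}_\tau-\nabla F(w_\tau)\|^2$, taking expectations, summing over $\tau=1,\dots,R$, and dividing by $\sum_\tau \eta_\tau$, yields
\begin{equation*}
\mathbbm{E}\!\left[\frac{\sum_{\tau=1}^R \eta_\tau \|\nabla F(w_\tau)\|^2}{\sum_{\tau=1}^R \eta_\tau}\right] \le \frac{2\bigl(F(w_1)-\inf F\bigr) + O\!\bigl(\epsilon^2\sum_\tau \eta_\tau\bigr) + O\!\bigl(\sum_\tau \eta_\tau^2\bigr)}{\sum_{\tau=1}^R \eta_\tau}.
\end{equation*}
Substituting $\eta_\tau = 1/\sqrt{\tau}$ so that $\sum_\tau\eta_\tau = \Theta(\sqrt{R})$ and $\sum_\tau\eta_\tau^2 = \Theta(\log R)$ gives the advertised rate $\tilde{O}(\epsilon + (\epsilon^2+1)/\sqrt{R})$, with the $\log R$ factor absorbed into $\tilde{O}$.

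\paragraph{Expected difficulty.} The routine pieces are the linear contraction of the inner gradient descent and the telescoping of the descent lemma; the subtle piece is the Jacobian-tracking step, namely showing that the gradient unrolled through $J$ inner iterations is an $O(\epsilon)$ approximation of the implicit hypergradient $\nabla F(w)$. This is exactly where the $(\alpha,\beta)$-regularity of $\mathcal{L}_{\textit{LPL}}$ is essential, since it controls the conditioning of the inner Hessian and thereby the sensitivity of the implicit solution; it is also the technical content that we invoke wholesale from \citep{pedregosa2016hyperparameter}, rather than reprove from scratch.
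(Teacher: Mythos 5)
The paper does not actually prove this statement: Appendix~\ref{app:c.} explicitly restates it as Theorem 3.3 of \citep{pedregosa2016hyperparameter} ``for completeness'' and offers no derivation, so there is no in-paper proof to compare yours against. Your sketch is the standard argument behind that cited result and is sound in outline: the linear contraction of the inner iterates under $\alpha$-strong convexity and $\beta$-smoothness, the resulting $O(\epsilon)$ hypergradient bias, and the biased-SGD telescoping with $\eta_\tau = 1/\sqrt{\tau}$ (your Young-inequality split even yields $\epsilon^2$ for the bias contribution rather than the looser $\epsilon$ appearing in the stated bound, which is consistent since $\epsilon \le 1$). Two caveats. First, as you yourself concede, the only genuinely hard step --- that the $J$-step unrolled Jacobian $\partial\hat{\bm{A}}_J/\partial w$ tracks the implicit-function Jacobian at the same geometric rate --- is still invoked from \citep{pedregosa2016hyperparameter} rather than proved, so your write-up is, like the paper's, ultimately a (more detailed) citation rather than a self-contained proof. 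Second, the clean bilevel template you set up does not quite match Algorithm~\ref{alg:GPL}: the inner objective in Equation~\ref{adaptive_edge_op_bo} depends on $w$ only through the discrete selection of $\mathcal{S}_{\hat{\pi}_\mathrm{p}}$, and $\hat{\pi}_\mathrm{p}$ in the outer loss is itself a non-smooth function of $\hat{\bm{A}}$ via the minimization in Equation~\ref{cpe}; neither map is differentiable, so the smoothness and strong-convexity hypotheses are assumptions about an idealized surrogate rather than verifiable properties of the algorithm as implemented. Neither caveat is a defect relative to the paper, which glosses over both.
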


\section{Proof of the Theorem~\ref{est_error}}
\label{proff_est_error}

According to the Lemma~\ref{lemma_err}, the upper bound of $e^{\bm{A}}$ is
\begin{equation}
\label{bound_1}
    b^{\bm{A}} = \frac{c}{\mathbbm{Q}_\mathrm{p}^{\bm{A}}(c^*)}\left( \sqrt{\frac{\log(4/\delta)}{n_\mathrm{u}}} + \sqrt{\frac{\log(4/\delta)}{n_\mathrm{p}}}\right)
\end{equation}
and the upper bound of $e^{\hat{\bm{A}}}$ is
\begin{equation}
\label{bound_2}
    b^{\hat{\bm{A}}} = \frac{c}{\mathbbm{Q}_\mathrm{p}^{\hat{\bm{A}}}(c^*)}\left( \sqrt{\frac{\log(4/\delta)}{n_\mathrm{u}}} + \sqrt{\frac{\log(4/\delta)}{n_\mathrm{p}}}\right).
\end{equation}

The
$\mathbbm{Q}^{\bm{A}}_{\mathrm{p}}(c^*)  := \sum_{\bm{x}_i \in \mathcal{P}}  \mathbbm{1} [f_{w}(\bm{x}_i,\mathcal{G}_{\bm{x}_i}^{\bm{A}}) \geq c^*] / n$ and $\mathbbm{Q}^{\hat{\bm{A}}}_{\mathrm{p}}(c^*)  := \sum_{\bm{x}_i \in \mathcal{P}}  \mathbbm{1} [f_{w}(\bm{x}_i,\mathcal{G}_{\bm{x}_i}^{\hat{\bm{A}}}) \geq c^*] / n$, we know that the GNN model transforms each node $(\bm{x},\mathcal{G}_{\bm{x}}^{\bm{A}})$ to a positive posterior probability $\bm{z} \in  [0,1]$, i.e., $\bm{z} = f_{w}(\bm{x},\mathcal{G}_{\bm{x}}^{\bm{A}})$. Numerous studies ~\citep{yan2022two,zhu2020beyond,zhu2021graph} have both theoretically and empirically demonstrated that heterophilic structures impede the confidence of GNN predictions. Therefore, for positive nodes, the positive posterior probability $\hat{\bm{z}} = f_{w}(\bm{x},\mathcal{G}_{\bm{x}}^{\hat{\bm{A}}})$ is greater than  $\bm{z} = f_{w}(\bm{x},\mathcal{G}_{\bm{x}}^{\bm{A}})$ due to the reduction of heterophilic structure.

Thus, we have
\begin{equation}
    \sum_{\bm{x}_i \in \mathcal{P}}  \mathbbm{1} [f_{w}(\bm{x}_i,\mathcal{G}_{\bm{x}_i}^{\hat{\bm{A}}}) \geq c^*] / n \geq \sum_{\bm{x}_i \in \mathcal{P}}  \mathbbm{1} [f_{w}(\bm{x}_i,\mathcal{G}_{\bm{x}_i}^{\bm{A}}) \geq c^*] / n,
\end{equation}
then
\begin{equation}
    \mathbbm{Q}^{\hat{\bm{A}}}_{\mathrm{p}}(c^*)  \geq  \mathbbm{Q}^{\bm{A}}_{\mathrm{p}}(c^*).
\end{equation}

According to Equation~\ref{bound_1} and~\ref{bound_2}, we have following inequality:
\begin{equation}
    b^{\hat{\bm{A}}} \leq b^{\bm{A}}.
\end{equation}

\section{Proof of Theorem \ref{thm:decrease}}
	\label{app:d}

Firstly, we analyze the layer-wise propagation rules of GNN for the node $\bm{x}_i$:
\begin{gather*}
    \bm{x}^{(l+1)}_i = \sigma \left( \sum_{\bm{x}_j \in \mathcal N(\bm{x}_i)}\tilde{\bm{A}}_{ij} \bm{x}_i^{(l)}w^{(l)} \right),
\end{gather*}
where $\tilde{\bm{A}}_{ij} = {\bm{A}}_{ij} / d_{ii}$ is the normalized weight of edge $e_{ij}$ with the degree $d_{ii}$. This formula can be decomposed into the following two steps:
(1) In \textit{aggregation} step, we calculate the aggregated representation ${\bm h}_i^{(l)}$ of all neighborhoods $\mathcal N(\bm{x}_i)$:
\begin{gather*}
    {\bm h}_i^{(l)} = \sum_{\bm{x}_j \in \mathcal N(\bm{x}_i)} \tilde{\bm{A}}_{ij} {\bm{x}}_j^{(l)}.
\end{gather*}
(2) In \textit{transformation} step, the aggregated representation ${\bm h}_i^{(l)}$ is mapped to a new space by a transformation matrix and nonlinear function $\sigma$:
\begin{gather*}
    {\bm{x}}_i^{(l+1)} = \sigma \big( {\bm h}_i^{(l)} w^{(l)} \big).
\end{gather*}

we assume that the dimension of node representations is one, but note that the conclusion can be easily generalized to the case of multi-dimensional representations since the function $D_{\mathrm{P}\mathrm{N}}({\bm{x}})$ can be decomposed into the sum of one-dimensional cases. This proof is according to~\citep{wang2020unifying}.
		In the following of this proof,  we still use bold notations ${\bm{x}}_i^{(l)}$ and ${\bm h}_i^{(l)}$ to denote node representations, but keep in mind that they are scalars rather than vectors.
		
		We give two lemmas before proving Theorem \ref{thm:decrease}.
		The first one is about the gradient of $D_{\mathrm{P}\mathrm{N}}({\bm{x}})$:
		\begin{lemma}
		\label{lemma:3}
			${\bm h}_i^{(l)} = {\bm{x}}_i^{(l)} - \frac{\partial D_{\mathrm{P}\mathrm{N}}({\bm{x}}^{(l)})}{\partial {\bm{x}}_i^{(l)}}$.
		\end{lemma}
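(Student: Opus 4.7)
The plan is to prove the identity by direct differentiation of the scalar quadratic $D_{\mathrm{P}\mathrm{N}}$ and then to read off the right-hand side using the row-stochasticity of the normalized weights $\tilde{\bm{A}}$. Structurally this is the graph-Laplacian--aggregation duality used in~\citep{wang2020unifying}, specialized here to the heterophilic (P--N) sub-structure. Since we have already reduced to scalar node representations, everything is a one-dimensional calculation.

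Concretely, I would first rewrite the sum defining $D_{\mathrm{P}\mathrm{N}}$ so that each cross-label edge $(i,j)$ with $\bm{x}_i\in\mathrm{P}$, $\bm{x}_j\in\mathrm{N}$ contributes a term involving $\tilde{\bm{A}}_{ij}$ to \emph{both} the $\bm{x}_i$-slot and the $\bm{x}_j$-slot; this is exactly what the factor $\tfrac{1}{2}$ is arranging, once one symmetrizes over ordered P--N pairs. Differentiating with respect to a single coordinate $\bm{x}_i^{(l)}$ then picks up exactly one $\tilde{\bm{A}}_{ij}(\bm{x}_i^{(l)}-\bm{x}_j^{(l)})$ per cross-label neighbour $\bm{x}_j$ of $\bm{x}_i$, giving
\begin{equation*}
\frac{\partial D_{\mathrm{P}\mathrm{N}}(\bm{x}^{(l)})}{\partial \bm{x}_i^{(l)}} \;=\; \sum_{\bm{x}_j\in\mathcal{N}(\bm{x}_i)} \tilde{\bm{A}}_{ij}\bigl(\bm{x}_i^{(l)}-\bm{x}_j^{(l)}\bigr).
\end{equation*}
Invoking the row-normalization $\sum_{\bm{x}_j\in\mathcal{N}(\bm{x}_i)}\tilde{\bm{A}}_{ij}=1$ collapses the first piece of the sum to $\bm{x}_i^{(l)}$, and the remaining piece is by definition $\bm{h}_i^{(l)}=\sum_{\bm{x}_j\in\mathcal{N}(\bm{x}_i)}\tilde{\bm{A}}_{ij}\bm{x}_j^{(l)}$. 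Rearranging yields $\bm{h}_i^{(l)}=\bm{x}_i^{(l)}-\partial D_{\mathrm{P}\mathrm{N}}/\partial \bm{x}_i^{(l)}$, which is exactly the claimed identity.

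The delicate point, and the one I would flag as the main obstacle, is that $D_{\mathrm{P}\mathrm{N}}$ is advertised as summed only over P--N pairs whereas $\bm{h}_i^{(l)}$ aggregates over all neighbours of $\bm{x}_i$. For the identity to be crisp I would fix a consistent convention at the outset: either interpret $\bm{h}_i^{(l)}$ and the normalization on the sub-graph induced by the heterophilic edges (so $\sum_j \tilde{\bm{A}}_{ij}=1$ is over cross-label neighbours only), or equivalently extend $D_{\mathrm{P}\mathrm{N}}$ to all edges and observe that the homophilic part contributes additively without disturbing the argument because the gradient identity is linear in the edge set. With that convention pinned down, the derivation above is essentially mechanical, and Theorem~\ref{thm:decrease} then drops out in one line: $D_{\mathrm{P}\mathrm{N}}$ is a convex quadratic, so $\bm{h}^{(l)}=\bm{x}^{(l)}-\nabla D_{\mathrm{P}\mathrm{N}}(\bm{x}^{(l)})$ is one step of unit-step-size gradient descent, and the step-size is small enough relative to the Hessian to guarantee descent of the objective.
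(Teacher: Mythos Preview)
Your proposal is correct and follows essentially the same route as the paper: differentiate $D_{\mathrm{P}\mathrm{N}}$ to obtain $\sum_{\bm{x}_j\in\mathcal{N}(\bm{x}_i)}\tilde{\bm{A}}_{ij}(\bm{x}_i^{(l)}-\bm{x}_j^{(l)})$, use row-stochasticity to collapse the first term to $\bm{x}_i^{(l)}$, and identify the remainder as $\bm{h}_i^{(l)}$. Your caution about the mismatch between the P--N index set in the definition of $D_{\mathrm{P}\mathrm{N}}$ and the full neighbourhood in $\bm{h}_i^{(l)}$ is well-placed; the paper's own proof silently passes to the full-neighbourhood convention in its derivative computation, so either of the fixes you outline is exactly what is needed to make the identity precise.
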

	
		\begin{proof}
  \begin{equation}
      {\bm{x}}_i^{(l)} - \frac{\partial D_{\mathrm{P}\mathrm{N}}({\bm{x}}^{(l)})}{\partial {\bm{x}}_i^{(l)}} = {\bm{x}}_i^{(l)} - \sum_{\bm{x}_j \in \mathcal N(\bm{x}_i)} \tilde{\bm{A}}_{ij} ({\bm{x}}_i^{(l)} - {\bm{x}}_j^{(l)}) = \sum_{\bm{x}_j \in \mathcal N(\bm{x}_i)}  \tilde{\bm{A}}_{ij} {\bm{x}}_j^{(l)} = {\bm h}_i^{(l)}
  \end{equation}
		\end{proof}
		
		It is interesting to see from Lemma \ref{lemma:3} that the aggregation step in GCN is equivalent to running gradient descent for one step with a step size of one.
		However, this is not able to guarantee that $D_{\mathrm{P}\mathrm{N}}({\bm h}^{(l)}) \leq D_{\mathrm{P}\mathrm{N}}({\bm{x}}^{(l)})$ because the step size may be too large to reduce the value of $D_{\mathrm{P}\mathrm{N}}$.
		
		The second lemma is about the Hessian of $D_{\mathrm{P}\mathrm{N}}({\bm{x}})$:
		
		\begin{lemma}
		\label{lemma:4}
			$\nabla^2 D_{\mathrm{P}\mathrm{N}}({\bm{x}}) \preceq 2I$, or equivalently, $2I - \nabla^2 D_{\mathrm{P}\mathrm{N}}({\bm{x}})$ is a positive semidefinite matrix.
		\end{lemma}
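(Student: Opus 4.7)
The plan is to reduce to the scalar case, recognize $\nabla^2 D_{\mathrm{P}\mathrm{N}}$ as a weighted graph Laplacian supported on the bipartite set of heterophilic P--N edges, and bound its largest eigenvalue via the row-stochasticity of $\tilde{\bm A}$ together with the elementary inequality $(a-b)^2 \le 2(a^2+b^2)$. Since $D_{\mathrm{P}\mathrm{N}}(\bm x) = \tfrac{1}{2}\sum_{i\in\mathrm{P},\,j\in\mathrm{N}} \tilde{\bm A}_{ij}\,\|{\bm x}_i-{\bm x}_j\|_2^2$ decouples additively across the $d$ feature coordinates, it suffices to prove $\nabla^2 D_{\mathrm{P}\mathrm{N}} \preceq 2I$ in the one-dimensional setting; the multi-dimensional statement then follows block-diagonally.

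In the scalar case, a straightforward differentiation shows that the Hessian $H=\nabla^2 D_{\mathrm{P}\mathrm{N}}$ is symmetric with quadratic form
\[
v^\top H v \;=\; \sum_{i\in\mathrm{P},\,j\in\mathrm{N}} \tilde{\bm A}_{ij}\,(v_i - v_j)^2,
\]
i.e.\ $H$ is precisely the weighted Laplacian on heterophilic edges. Applying $(v_i-v_j)^2 \le 2(v_i^2+v_j^2)$ and regrouping yields
\[
v^\top H v \;\le\; 2\sum_{i\in\mathrm{P}} v_i^2 \sum_{j\in\mathrm{N}} \tilde{\bm A}_{ij} \;+\; 2\sum_{j\in\mathrm{N}} v_j^2 \sum_{i\in\mathrm{P}} \tilde{\bm A}_{ij} \;\le\; 2\,\|v\|_2^2,
\]
where the final step uses that the normalization $\tilde{\bm A}_{ij} = {\bm A}_{ij}/d_{ii}$ makes $\tilde{\bm A}$ row-stochastic, so each inner sum is bounded by the full row sum $\sum_k \tilde{\bm A}_{\cdot k} \le 1$. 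Since $v$ was arbitrary, $H \preceq 2I$.

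The main obstacle I anticipate is the second inner sum $\sum_{i\in\mathrm{P}} \tilde{\bm A}_{ij}$: this is a \emph{column} rather than a row sum of $\tilde{\bm A}$, and under the row-normalization stated in the excerpt it is not automatically bounded by one. I would address this in one of two equivalent ways: (i) invoke the symmetric normalization $\tilde{\bm A} = \bm D^{-1/2}\bm A\bm D^{-1/2}$ customary in the GCN literature, under which both row and column sums are at most one; or (ii) replace the asymmetric weight in the definition of $D_{\mathrm{P}\mathrm{N}}$ by its symmetrization $\tfrac{1}{2}(\tilde{\bm A}_{ij}+\tilde{\bm A}_{ji})$, which leaves the quadratic form $v^\top H v$ unchanged while restoring the required bound on the column-sum factor. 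With either convention fixed, the proof concludes in a few lines.
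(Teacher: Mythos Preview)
Your route---bounding the Laplacian quadratic form directly via $(a-b)^2\le 2(a^2+b^2)$ together with stochasticity of $\tilde{\bm A}$---is genuinely different from the paper's. The paper does not use that inequality at all: it writes the Hessian out explicitly as $I-D^{-1}\bm A$, notes that $D^{-1}\bm A$ is a Markov matrix and hence has spectrum contained in $[-1,1]$, and concludes that $2I-\nabla^2 D_{\mathrm{PN}}=I+D^{-1}\bm A$ has nonnegative eigenvalues. The decisive step is thus a \emph{spectral} bound on the random-walk matrix, not a combinatorial row/column-sum bound.

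You are right to flag the column-sum obstacle, but neither of your proposed fixes closes it. Fix~(i) is false as stated: the symmetric normalization $D^{-1/2}\bm A D^{-1/2}$ does \emph{not} have row (hence column) sums bounded by one---on a star with a degree-$m$ center and degree-$1$ leaves, the center's row sum equals $\sqrt{m}$. What symmetric normalization actually buys is the spectral bound $\lVert D^{-1/2}\bm A D^{-1/2}\rVert_2\le 1$, which is precisely the paper's argument, not yours. Fix~(ii) also fails: the defining sum for $D_{\mathrm{PN}}$ runs over $i\in\mathrm P,\ j\in\mathrm N$ with $\mathrm P\cap\mathrm N=\emptyset$, so there is no companion $(j,i)$ term to average against, and replacing $\tilde{\bm A}_{ij}$ by $\tfrac12(\tilde{\bm A}_{ij}+\tilde{\bm A}_{ji})$ genuinely alters the quadratic form. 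Indeed, under pure row normalization the P--N Hessian can have eigenvalues far exceeding $2$: take a star with center $j\in\mathrm N$ and $m$ leaves in $\mathrm P$, so that $\tilde{\bm A}_{ij}=1$ for every leaf and the diagonal entry $H_{jj}=\sum_{i\in\mathrm P}\tilde{\bm A}_{ij}=m$. Hence a row/column-sum argument alone cannot reach the conclusion; some spectral input of the kind the paper invokes is unavoidable.
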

		
		\begin{proof}
			We first calculate the Hessian of $D_{\mathrm{P}\mathrm{N}}({\bm{x}}) = \frac{1}{2} \sum_{\bm{x}_i \in \mathrm{P}, \bm{x}_j \in \mathrm{N}}\tilde a_{ij} \| {\bm{x}}_i - {\bm{x}}_j \|_2^2$:
			\begin{equation}
				\nabla^2 D_{\mathrm{P}\mathrm{N}}({\bm{x}}) =
				\left[
					\begin{matrix}
						1-\tilde a_{11} & -\tilde a_{12} & \cdots & -\tilde a_{1n} \\
						-\tilde a_{21} & 1-\tilde a_{22} & \cdots & -\tilde a_{2n} \\
						\vdots & \vdots & \ddots & \vdots \\
						-\tilde a_{n1} & -\tilde a_{n2} & \cdots & 1-\tilde a_{nn} \\
					\end{matrix}
				\right]
				= I - D_{\mathrm{P}\mathrm{N}}^{-1}A.
			\end{equation}
			Therefore, $2I - \nabla^2 D_{\mathrm{P}\mathrm{N}}({\bm{x}}) = I + D_{\mathrm{P}\mathrm{N}}^{-1}\bm A$.
			Since $D_{\mathrm{P}\mathrm{N}}^{-1} \bm A$ is Markov matrix (i.e., each entry is non-negative and the sum of each row is one), its eigenvalues are within the range [-1, 1], so the eigenvalues of $I + D_{\mathrm{P}\mathrm{N}}^{-1} \bm A$ are within the range [0, 2].
			Therefore, $I + D_{\mathrm{P}\mathrm{N}}^{-1} \bm A$ is a positive semidefinite matrix, and we have $\nabla^2 D_{\mathrm{P}\mathrm{N}}({\bm{x}}) \preceq 2I$.
		\end{proof}
		
		We can now prove Theorem \ref{thm:decrease}:
		
		\begin{proof}
			Since $D_{\mathrm{P}\mathrm{N}}$ is a quadratic function, we perform a second-order Taylor expansion of $D_{\mathrm{P}\mathrm{N}}$ around ${\bm{x}}^{{(l)}}$ and obtain the following inequality:
			\begin{equation}
			\begin{split}
				D_{\mathrm{P}\mathrm{N}}({\bm h}^{(l)}) = & D_{\mathrm{P}\mathrm{N}}({\bm{x}}^{(l)}) + \nabla D_{\mathrm{P}\mathrm{N}}({\bm{x}}^{(l)})^\top ({\bm h}^{(l)} - {\bm{x}}^{(l)}) + \frac{1}{2} ({\bm h}^{(l)} - {\bm{x}}^{(l)})^\top \nabla^2 D_{\mathrm{P}\mathrm{N}}({\bm{x}}) ({\bm h}^{(l)} - {\bm{x}}^{(l)}) \\
				= & D_{\mathrm{P}\mathrm{N}}({\bm{x}}^{(l)}) - \nabla D_{\mathrm{P}\mathrm{N}}({\bm{x}}^{(l)})^\top \nabla D_{\mathrm{P}\mathrm{N}}({\bm{x}}^{(l)}) + \frac{1}{2} \nabla D_{\mathrm{P}\mathrm{N}}({\bm{x}}^{(l)})^\top \nabla^2 D_{\mathrm{P}\mathrm{N}}({\bm{x}}) \nabla D_{\mathrm{P}\mathrm{N}}({\bm{x}}^{(l)}) \\
				\leq & D_{\mathrm{P}\mathrm{N}}({\bm{x}}^{(l)}) - \nabla D_{\mathrm{P}\mathrm{N}}({\bm{x}}^{(l)})^\top \nabla D_{\mathrm{P}\mathrm{N}}({\bm{x}}^{(l)}) + \nabla D_{\mathrm{P}\mathrm{N}}({\bm{x}}^{(l)})^\top \nabla D_{\mathrm{P}\mathrm{N}}({\bm{x}}^{(l)}) = D_{\mathrm{P}\mathrm{N}}({\bm{x}}^{(l)}).
			\end{split}
			\end{equation}
		\end{proof}

\section{Experiment}
\label{exper_de}

\subsection{Datasets}
The statistical information of used datasets is shown in Table~\ref{table::dataset}.
\begin{table}[ht]
\begin{center}
\caption{Summary of datasets.}
\label{table::dataset}
\begin{tabular}{c|ccc|cc}
\toprule
\multicolumn{1}{c|}{\bf Name} & \multicolumn{1}{c}{\bf Nodes} & \multicolumn{1}{c}{\bf Edges} & \multicolumn{1}{c|}{\bf Features} & \multicolumn{1}{c}{\bf Pos.} & \multicolumn{1}{c}{\bf Neg.}\\
\hline
Cora     & $2,708$  & $5,278$   & $1,433$ & $818$   & $1,890$ \\
CiteSeer & $3,327$  & $4,552$   & $3,703$ & $701$   & $2,626$ \\
PubMed   & $19,717$ & $44,324$  & $500$   & $7,875$ & $11,842$\\
WikiCS   & $11,701$ & $215,603$ & $300$   & $2,679$ & $9,022$ \\
\hline
Cornell   & $183$ & $298$ & $1,703$   & $41$ & $142$ \\
Chameleon   & $2,277$ & $36,101$ & $2,325$   & $260$ & $2,017$ \\
Squirrel   & $5,201$ & $2,089$ & $217,073$   & $521$ & $4,680$ \\
Actor   & $7,600$ & $30,019$ & $932$   & $982$ & $6,618$ \\
Wisconsin    & $251$ & $515$ & $1,703$   & $59$ & $192$ \\
Texas & $183$ & $325$ & $1,703$   & $50$ & $133$ \\

\bottomrule
\end{tabular}
\end{center}
\end{table}

\subsection{Baseline Details} \label{baseline}
In more detail, we employ baselines:
\begin{itemize}
\item [(1)] GCN: Utilizing the cross-entropy loss function to train a Graph Convolutional Network (GCN) and treating all unlabeled nodes as negative examples.

\item [(2)] MLP: Applying the cross-entropy loss during the training of a Multi-Layer Perceptron (MLP), where unlabeled nodes are treated as negative examples.

\item [(3)] GCN+TED: Combining the GCN model with the TED method~\citep{garg2021mixture}, a state-of-the-art approach specifically tailored for PU learning, which estimates the unknown class prior.

\item [(4)] MLP+TED: Integrating the MLP model with the TED method for PU learning.

\item [(5)]  GCN+NNPU: Adopting a GCN model with the Non-Negative PU (NNPU) method~\citep{kiryo2017positive}, which employs a non-negative risk estimator specially designed for PU learning to estimate the unknown negative risk.

\item [(6)]  MLP+NNPU: Using an MLP model with the NNPU method.

\item [(7)]  LSDAN~\citep{ma2017pu}: Improving
GCN on PU learning by the long-short distance attention that effectively
combines the information of multi-hop neighbors.

\item [(8)] GRAB~\citep{yoo2021accurate}: Estimating class priors using heuristics and learning from positively labeled nodes in the presence of unlabeled nodes, leveraging the homophily graph structure.

\item [(9)] PU-GNN~\citep{yang2023positive}: Introducing a distance-aware loss that leverages homophily within graphs to provide more precise supervision in the context of graph-based Positive-Unlabeled (PU) learning.

\end{itemize}

\subsection{Performance with different GNN architectures}
We evaluate our proposed GPL on different GNN architectures, i.e., GCN~\citep{kipf2016semi}, GAT~\citep{velivckovic2017graph}, ARMA~\citep{bianchi2021graph} and APPNP~\citep{gasteiger2018predict}. The experiments are conducted on various datasets, which are shown in Table~\ref{tab:GNN_architectures}. As can be seen, GPL performs similarly on different GNN architectures, showing consistent generalization on different architectures.

\begin{table*} [h]

    \centering
    \caption{
    Mean F1 score $\pm$ stdev over different datasets. The experimental results are reported over five trials. Bold numbers are superior results.
    }
    \vspace{-5pt}
    \label{tab:GNN_architectures}
    \resizebox{\linewidth}{!}{
    \begin{tabular}{l|c|c|c|c| c|c|c|c|c|c} 
    \toprule
     \texttt{\bf Backbone}   &  \texttt{\bf Cora}           &   \texttt{\bf Pubmed}           &   \texttt{\bf Citeseer}   &   \texttt{\bf Wiki-CS}             &   \texttt{\bf Cornell} & \texttt{\bf Chameleon}        &  \texttt{\bf Squirrel}   &   \texttt{\bf Actor}           &   \texttt{\bf Wisconsin}            &   \texttt{\bf Texas}  \\

           \midrule
        GPL+GCN & $ {81.9{\scriptstyle\pm0.5}}$ 
        &$ {79.1{\scriptstyle\pm0.4}}$
        &$ {74.1{\scriptstyle\pm0.9}}$
        &$ \bf{82.3{\scriptstyle\pm0.8}}$
        &$ {37.9{\scriptstyle\pm1.3}}$
        &$ {36.2{\scriptstyle\pm1.4}}$
        &$ {37.7{\scriptstyle\pm5.8}}$
        &$ {48.0{\scriptstyle\pm4.2}}$
        &$ \bf{45.8{\scriptstyle\pm2.7}}$
        &$ {46.3{\scriptstyle\pm3.2}}$
\\
           \midrule
        GPL+GAT & $ \bf{82.1{\scriptstyle\pm0.5}}$ 
        &$ \bf{79.8{\scriptstyle\pm0.7}}$
        &$ {74.7{\scriptstyle\pm0.2}}$
        &$ {81.9{\scriptstyle\pm0.3}}$
        &$ {36.3{\scriptstyle\pm1.1}}$
        &$ {35.7{\scriptstyle\pm0.9}}$
        &$ \bf{37.9{\scriptstyle\pm4.6}}$
        &$ {48.8{\scriptstyle\pm3.5}}$
        &$ {44.8{\scriptstyle\pm1.8}}$
        &$ \bf{46.6{\scriptstyle\pm2.1}}$
\\

           \midrule
        GPL+ARMA & $ {81.2\scriptstyle\pm1.1}$ 
        &$ {78.9{\scriptstyle\pm0.9}}$
        &$ {74.1{\scriptstyle\pm0.5}}$
        &$ {81.7{\scriptstyle\pm0.7}}$
        &$ \bf{38.0{\scriptstyle\pm0.8}}$
        &$ {35.9{\scriptstyle\pm1.2}}$
        &$ {36.9{\scriptstyle\pm4.5}}$
        &$ {47.0{\scriptstyle\pm2.2}}$
        &$ {44.9{\scriptstyle\pm2.3}}$
        &$ {46.5{\scriptstyle\pm2.9}}$
\\
           \midrule
        GPL+APPNP & $ {81.5{\scriptstyle\pm0.6}}$ 
        &$ {79.3{\scriptstyle\pm0.5}}$
        &$ \bf{74.9{\scriptstyle\pm0.7}}$
        &$ {82.1{\scriptstyle\pm0.6}}$
        &$ {37.7{\scriptstyle\pm1.8}}$
        &$ \bf{36.8{\scriptstyle\pm1.1}}$
        &$ {37.1{\scriptstyle\pm4.9}}$
        &$ \bf{49.5{\scriptstyle\pm3.8}}$
        &$ {45.1{\scriptstyle\pm2.9}}$
        &${45.9{\scriptstyle\pm2.6}}$
\\

	   \bottomrule
    \end{tabular}
    }
\end{table*}

\subsection{Hyperparameter sensitivity}
In GPL, the hyperparameter $K$ affects the performance by controlling the iteration times in the label propagation loss. To assess the sensitivity of GPL to $K$, we analyzed its effect as depicted in Figure~\ref{fig:hyper}. Our findings reveal an initial increase in the F1 score with a rising $K$, indicative of effective label propagation loss (LPL) functioning. However, a significant drop in the F1 score is observed when $K$ increases excessively, suggesting that overly high iteration counts hinder the convergence of LPL and performance. 

\begin{figure*}[h]
  \centering 

{
    \begin{minipage}[b]{0.47\linewidth}
      \centering
      \includegraphics[width=\linewidth]{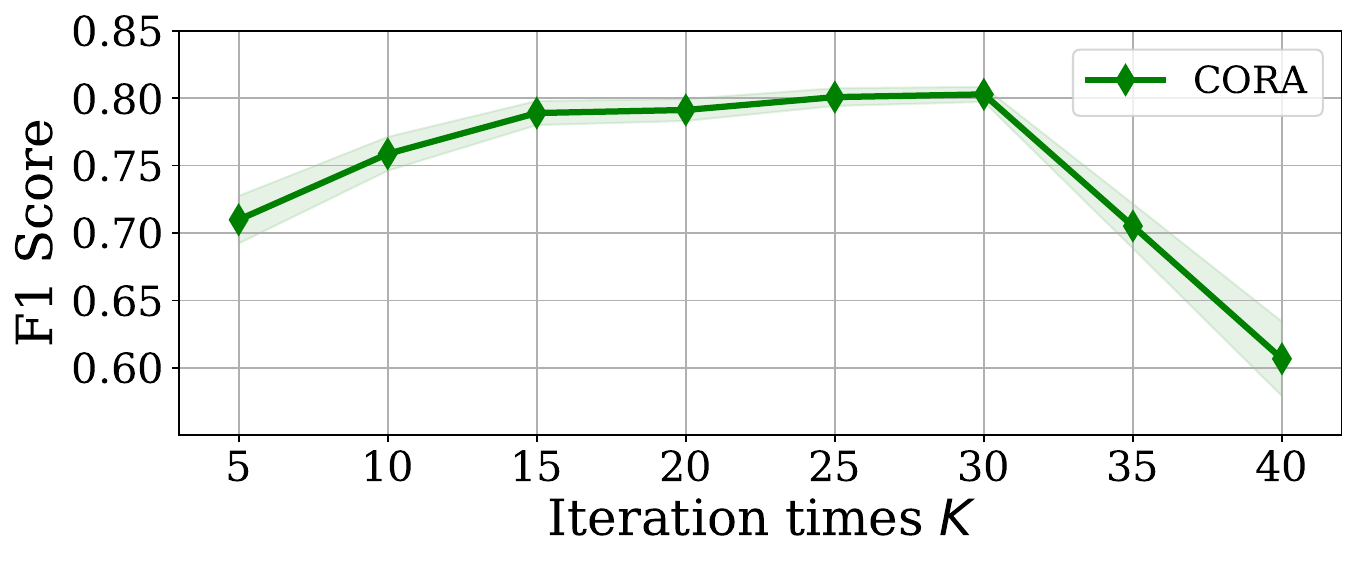}
      \includegraphics[width=\linewidth]{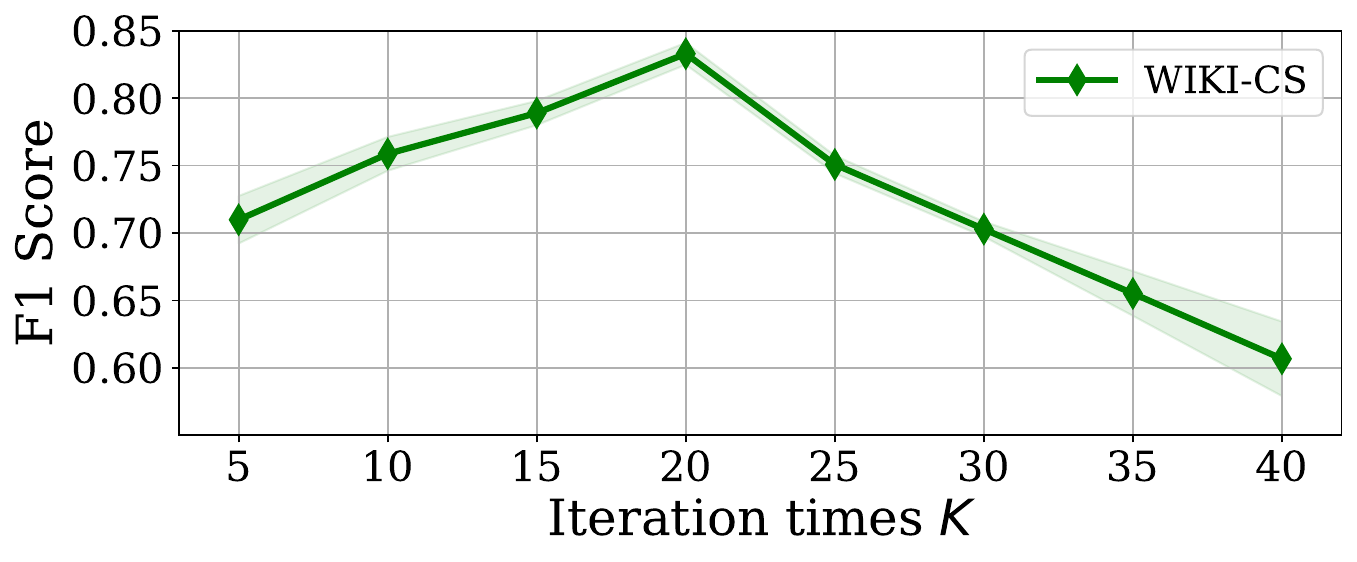}
    \end{minipage}
  }
\hspace{-2mm}
{
    \begin{minipage}[b]{0.47\linewidth}
      \centering
      \includegraphics[width=\linewidth]{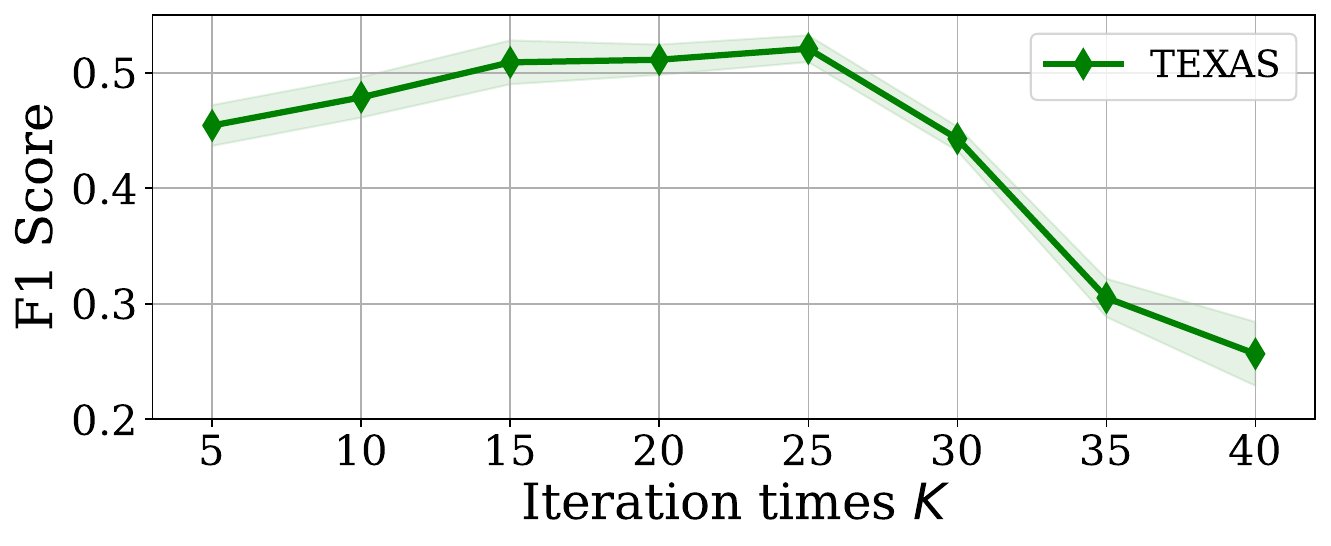}
      \includegraphics[width=\linewidth]{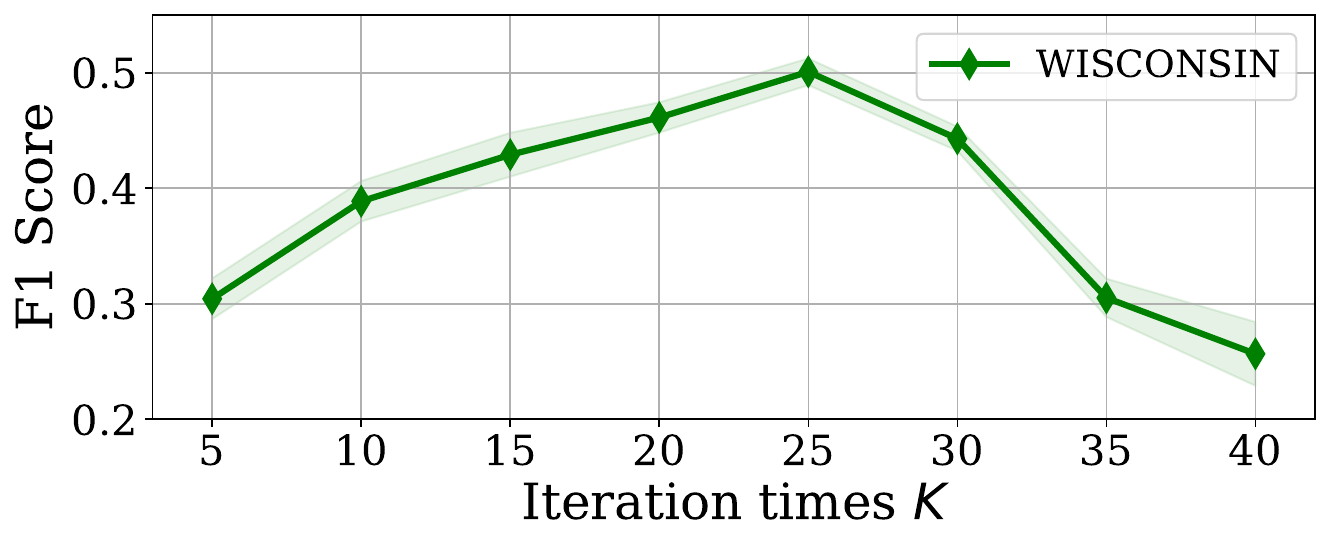}
    \end{minipage}
  }

  \vfill
  \caption{The F1 score of GPL with increasing $K$ on \textit{Cora}, \textit{WIKI-CS}, \textit{Texas} and \textit{Wisconsin}.}
  \label{fig:hyper}
\end{figure*}

\end{document}